\journal{Neurocomputing}
\newcommand{\vect}[1]{\mathbf{#1}}
\newtheorem{thm}{Theorem}[]
\newtheorem{thmA}{Theorem}[]
\newtheorem{lem}{Lemma}[]
\newtheorem{cor}{Corollary}[]
\newtheorem{prop}{Proposition}[]
\newtheorem{propA}{Proposition}[]
\DeclareMathOperator*{\argmax}{arg\,max}
\DeclareMathOperator*{\argmin}{arg\,min}
\DeclareRobustCommand{\cc}[1]{{#1}}
\soulregister{\cc}{1}
\begin{document}

\begin{frontmatter}

\title{ASAT: Adaptively Scaled Adversarial Training in Time Series}

%% Group authors per affiliation:
\author[address1]{Zhiyuan Zhang}
\ead{zzy1210@pku.edu.cn}
\author[address1]{Wei Li}
\ead{liweitj47@pku.edu.cn}
\author[address2]{Ruihan Bao}
\ead{ruihan.bao@mizuho-sc.com}
\author[address2]{Keiko Harimoto}
\ead{keiko.harimoto@mizuho-sc.com}
\author[address1]{Yunfang Wu}
\ead{wuyf@pku.edu.cn}
%\author[address1,address3]{Xu Sun\corref{mycorrespondingauthor}}
\author[address1]{Xu Sun\corref{mycorrespondingauthor}}
\ead{xusun@pku.edu.cn}

\address[address1]{MOE Key Laboratory of Computational Linguistics, School of Computer Science, Peking University, Beijing, China.}
\address[address2]{Mizuho Securities Co., Ltd, Japan.}
%\address[address3]{Center for Data Science, Peking University, Beijing, China.}
%\address[address4]{School of Information Science, Beijing Language and Culture University, Beijing, China.}

\cortext[mycorrespondingauthor]{Corresponding author.}

\begin{abstract}
Adversarial training is a method for enhancing neural networks to improve the robustness against adversarial examples. Besides the security concerns of potential adversarial examples, adversarial training can also improve the generalization ability of neural networks, train robust neural networks, and provide interpretability for neural networks. In this work, we introduce adversarial training in time series analysis to enhance the neural networks for better generalization ability by taking the finance field as an example. Rethinking existing research on adversarial training, we propose the adaptively scaled adversarial training (ASAT) in time series analysis, by rescaling data at different time slots with adaptive scales. Experimental results show that the proposed ASAT can improve both the generalization ability and the adversarial robustness of neural networks compared to the baselines. Compared to the traditional adversarial training algorithm, ASAT can achieve better generalization ability and similar adversarial robustness.
\end{abstract}
\begin{keyword}
Time Series, Adversarial Training, Adaptive Adversarial Training
\end{keyword}
\end{frontmatter}
%\linenumbers

\section{Introduction}

Neural networks are found vulnerable to adversarial examples~\citep{Intriguing_properties_of_neural_networks,Explaining_and_Harnessing_Adversarial_Examples,Adversarial_examples_in_the_physical_world,Deepfool} despite their promising performance. Adversarial examples are generated by adding small malicious perturbations to the input data that can mislead models, which reveal the vulnerability of neural networks with respect to the input data. Adversarial training~\citep{Explaining_and_Harnessing_Adversarial_Examples,Towards_Evaluating_the_Robustness_of_Neural_Networks,YOPO,ForFree,freeLB} is a method for enhancing neural networks that can improve both the adversarial robustness and the generalization ability at the same time. Besides the security concerns that neural network faces potential risks of adversarial example attack, researches on adversarial training focus on improving the accuracy of the neural networks~\citep{YOPO,ForFree,freeLB}, training robust neural networks~\citep{robust_domain_text,Robsut_Translation_Doubly_Adversarial,Robust_Machine_Translation}, and providing interpretability for neural networks~\citep{Intriguing_properties_of_neural_networks,UAP,Decision_Boundary}.

Existing researches on adversarial training to enhance the neural networks mainly focus on the CV~\citep{Intriguing_properties_of_neural_networks,Explaining_and_Harnessing_Adversarial_Examples,YOPO,ForFree} and NLP~\citep{robust_domain_text,Robsut_Translation_Doubly_Adversarial,Robust_Machine_Translation,freeLB} fields. In the finance field, some researches~\citep{GAN19,GAN21,simonetto2018generatingspikingGAN,zhang2018GANingrids,ramponi2018tcGAN,yoon2019timeGAN,takahashi2019modelingwithGAN,data-Augmentation-GANs,Clinical-time-series-GAN,TTS-GAN} about generative adversarial networks (GANs)~\citep{DBLP:journals/corr/GAN} are proposed to generate or synthesize better time-series data that can preserve temporal dynamic in time-series. However, in this work, we introduce adversarial training in time series analysis or the finance field to enhance the neural networks for the generalization purpose.

Traditional adversarial training methods adopt the $L_p$-norm constraint that treats every dimension of perturbations symmetrically. Rethinking the shift-invariant hypothesis~\citep{Convolutional_Networks_Shift-Invariant_Again,Truly_shift-invariant_convolutional_neural_networks} of convolution neural networks in the computer vision (CV) field and the equivalence of different dimensions of word embeddings in the natural language processing (NLP) field~\citep{hotflip,freeLB}, it is roughly reasonable that every dimension of the input data is treated as of similar significance by the $L_p$-norm constraint in adversarial training methods. However, in time series, different dimensions of inputs are not symmetrical. Therefore, we propose to adopt a constraint with time-dependent or adaptive scales in different dimensions. In our work, we propose adaptively scaled adversarial training (ASAT) in time series analysis. The main purpose of our work is to improve the generalization ability of neural networks.

To validate the effects of ASAT, we implement some representative baselines on a representative time series task on the volume prediction task in the finance field, including moving baselines and several neural network baselines. Experimental results show that ASAT can improve the generalization ability of neural networks compared to both the baselines and the traditional adversarial training algorithm. Moreover, ASAT can improve the adversarial robustness of neural networks compared to the baselines and achieve similar adversarial robustness compared to the traditional adversarial training algorithm.

Besides enhancing neural networks, we also try to explain the decision bases of black-box neural networks towards more interpretability for neural network decisions via the proposed dimension-wise adversarial sensitivity indicator. 
With the proposed indicator, we have some interesting findings: \textit{(1) ASAT can alleviate overfitting.} As analyzed in Sec.~\ref{sec:overfit}, baseline models with ASAT are sensitive to different dimensions during multiple training, which indicates that baseline models tend to overfit to some false clues, while ASAT can alleviate it. 
\textit{(2) ASAT can help models capture fluctuations in time series.} As analyzed in Sec.~\ref{sec:single}, we make detailed examinations of a single data instance and find that ASAT can help models to be more sensitive to abnormal fluctuations. The decision bases of models tend to be more reasonable with ASAT. \textit{(3) Models tend to pay close attention to the volumes of recent time slots.} Models tend to be sensitive to volumes of recent time slots. Since recent time slots are more important in time series regularization, it is reasonable and also accords with human intuition.

Our contributions are summarized as follows:
\begin{itemize}
\item Rethinking existing adversarial training approaches, we propose to rescale dimensions of the perturbation according to their importance and adopt time-dependent or adaptive scales with multi-step risk averaging. We propose an adaptively scaled adversarial training (ASAT) algorithm.
\item We adopt ASAT on various baselines, including our newly proposed Transformer baselines. Results show that ASAT can improve both the generalization ability and the adversarial robustness of neural networks compared to the baselines. Compared to the traditional adversarial training algorithm, ASAT can achieve better generalization ability and similar adversarial robustness. 
\item With the proposed dimension-wise adversarial sensitivity indicator, we probe the sensitivities of different input dimensions and explain the decision bases of black-box neural networks, which provides more interpretability for neural network decisions.
\end{itemize}

\section{Adversarial Training}
In this section, we first introduce the concept of adversarial examples and adversarial training. Then, we describe the proposed dimension-wise adversarial sensitivity indicator inspired by the adversarial sensitivity.

\subsection{Adversarial Examples}

Let $\mathcal{D}$ denote the dataset, $\vect{x}=(x_1, x_2,\cdots, x_k)^\text{T}\in\mathbb{R}^k$ and $y$ stand for a data input and its ground truth, $f$ denote the neural network with the parameter vector $\bm \theta$ and $\mathcal{L}$ denote the loss function. The \textbf{adversarial example}~\citep{Intriguing_properties_of_neural_networks,Explaining_and_Harnessing_Adversarial_Examples} can be defined as the small perturbation $\bm\delta=(\delta_1, \delta_2,\cdots, \delta_k)^\text{T}\in\mathbb{R}^k$ on the data input $\vect{x}$ that can mislead the model and cause the maximum increases in the loss function:
\begin{align}
\bm\delta = \argmax_{\bm\delta\in S}\mathcal{L}(f(\vect{x}+\bm\delta,\bm\theta), y),
\label{eq:adversarial_example}
\end{align}
where $S$ denotes the \textbf{constraint set}, the perturbation is specified by $S$. For example, $S=\{\bm\delta:\|\bm\delta\|_2\le \epsilon\}$ specifies that the perturbation should be in a $k$-dimensional hyperball with a radius of $\epsilon$. Typical constraints are $L_p$-norm bounded ($1\le p\le+\infty$) and $S=\{\bm\delta:\|\bm\delta\|_p\le \epsilon\}$, where the $L_p$-norm is defined as:
\begin{align}
\|\bm\delta\|_p=\big(\sum\limits_{i=0}^{k}|\delta_i|^p\big)^{\frac{1}{p}},\quad \|\bm\delta\|_{+\infty}=\max\limits_{1\le i\le k}|\delta_i|.
\end{align}

Since neural networks are sensitive to malicious perturbations in inputs, \textit{i.e.}, \textbf{adversarial attacks}, the \textbf{adversarial risk} is defined as the loss after adversarial attacks and the \textbf{empirical risk} is defined as the initial loss:
\begin{align}
\text{Risk}_\text{adv}(S)&:=\mathbb{E}\big[\max\limits_{\bm\delta\in S}\mathcal{L}(f(\vect{x}+\bm\delta, \bm\theta), y)],\\
\text{Risk}_\text{emp}&:=\mathbb{E}\big[\mathcal{L}(f(\vect{x}, \bm\theta), y)\big].
\end{align}

The loss increases caused by adversarial attacks can be defined as the \textbf{adversarial sensitivity}: 
\begin{align}
\mathcal{R}_\text{adv}(S):&=\text{Risk}_\text{adv}(S)-\text{Risk}_\text{emp}\\
&=\mathbb{E}\big[\max\limits_{\bm\delta\in S}\mathcal{L}(f(\vect{x}+\bm\delta, \bm\theta), y)-\mathcal{L}(f(\vect{x}, \bm\theta), y)\big].
\end{align}

The adversarial risk or sensitivity can be defined on a dataset or a single data instance. The adversarial risk or sensitivity on the test dataset can evaluate the robustness of neural networks with respect to adversarial attacks, a lower adversarial sensitivity or risk means a better \textbf{adversarial robustness}.

\subsection{Generating Adversarial Attacks}
To generate adversarial attacks, the fast gradient methods~\citep{Explaining_and_Harnessing_Adversarial_Examples}, including the fast gradient sign method (FGSM) and the fast gradient method (FGM), propose to optimize Eq.(\ref{eq:adversarial_example}) via optimizing the first-order Taylor Expansion term of the loss function instead when $S=\{\bm\delta:\|\bm\delta\|_p\le \epsilon\}$, namely maximizing the inner product of $\bm\delta$ and the gradient $\vect{g}=\nabla_{\vect{x}}\mathcal{L}( f(\vect{x},\bm\theta), y)$:
\begin{align}
\bm\delta = \argmax_{\bm\delta\in S}\bm\delta^{\text{T}}\vect{g}=\epsilon\text{sgn}(\vect{g})\odot\frac{|\vect{g}|^\frac{1}{p-1}}{\||\vect{g}|^\frac{1}{p-1}\|_p},
\label{eq:fast_gradient}
\end{align}
where $\odot$ denotes element-wise product and sgn denotes element-wise sign function. FGSM adopts $p=+\infty$ and FGM adopts $p=2$.

Another representative line of existing studies for generating adversarial examples is PGD-based attacks~\citep{Towards_Deep_Learning_Models_Resistant_to_Adversarial_Attacks,Unified-min-max}. Suppose $S=\{\bm\delta:\|\bm\delta\|_p\le \epsilon\}$. The projected gradient descent (PGD) algorithm optimizes Eq.(\ref{eq:adversarial_example}) via a multi-step variant of fast gradient methods~\citep{Explaining_and_Harnessing_Adversarial_Examples}:
\begin{align}
\bm\delta_{k+1} = \Pi_S(\bm\delta_k + \vect{u}_{k+1}),
\label{eq:PGD} 
\end{align}
where $\bm\delta_0$ is usually set to the zero vector and $\vect{u}_k$ is the $k$-th update in PGD. It is usually solved in Eq.(\ref{eq:fast_gradient}) and defined as $\vect{u}_k=\argmax_{\vect{u}\in S_1}\vect{u}^{\text{T}}\vect{g}_{k-1}$, where $\vect{g}_{k-1}$ is defined as the gradient of perturbed input $\vect{x}+\bm\delta_{k-1}$, $\vect{g}_{k-1}=\nabla_{\vect{x} + \bm\delta_{k-1}}\mathcal{L}( f(\vect{x} + \bm\delta_{k-1},\bm\theta), y)$, where the constraint is $S_1=\{\vect{u}:\|\vect{u}\|_p\le \tau\}$ and the step size is set to $\tau$. $\Pi_S(\vect{v})$ is the projection function that projects $\vect{v}$ into the set $S$. In two common cases, $L_2$-norm~\citep{freeLB} and $L_{+\infty}$-norm~\citep{Towards_Deep_Learning_Models_Resistant_to_Adversarial_Attacks}, the projection functions are:
\begin{align}
\Pi_{\{\vect{v}:\|\vect{v}\|_2\le\epsilon\}}{(\vect{v})} &= \min\{\|\vect{v}\|_2,\epsilon\}\frac{\vect{v}}{\|\vect{v}\|_2},
\label{eq:projection_L2}\\ \Pi_{\{\vect{v}:\|\vect{v}\|_{+\infty}\le\epsilon\}}{(\vect{v})}  &= \text{clip}(\vect{v},-\epsilon, \epsilon),
\label{eq:projection_Linf}
\end{align}
where the $\text{clip}(\vect{h})$ function clips each dimension $h_i$ of $\vect{h}$ into $[-\epsilon, \epsilon]$, namely $\text{clip}(\vect{h})_i=\min(\max(h_i, -\epsilon), \epsilon)$.

\subsection{Traditional Adversarial Training}

Since neural networks are sensitive to adversarial examples, \textbf{adversarial training}~\citep{Explaining_and_Harnessing_Adversarial_Examples,Towards_Evaluating_the_Robustness_of_Neural_Networks} algorithms are designed to improve both the adversarial robustness and the generalization ability of neural networks. The target of traditional adversarial training is seeking to find the parameter vector $\bm \theta$ with lower adversarial risks:
\begin{align}
\bm\theta = \argmin_{\bm\theta}\mathbb{E}_{(\vect{x},y)\sim\mathcal{D}}\big[\max_{\bm\delta\in S}\mathcal{L}(f(\vect{x}+\bm\delta,\bm\theta), y)\big].
\label{eq:PGD_target}
\end{align}

\cite{Explaining_and_Harnessing_Adversarial_Examples} estimate the risk for adversarial attacks via the fast gradient methods. Other existing studies also estimate the risk via PGD-based attacks~\citep{Towards_Evaluating_the_Robustness_of_Neural_Networks,YOPO,ForFree,freeLB,Unified-min-max}. Fast gradient methods, including FGSM and FGM, can be seen as PGD-based attacks with the step number $K=1$. Besides, some PGD-based adversarial training algorithms~\citep{ForFree,freeLB} minimize the adversarial sensitivities of multiple perturbations generated in different steps.

\subsection{Dimension-wise Adversarial Sensitivity}

Inspired by the adversarial sensitivity, we propose to probe the sensitivities of different input dimensions. Besides the adversarial sensitivity of the whole model, we can also define the dimension-wise adversarial sensitivity indicator, namely the sensitivity of the $i$-th input dimension. Suppose $\mathcal{R}^{(i)}_\text{adv}(\epsilon)$ denotes the sensitivities of the $i$-th dimension, namely the expectation of the maximum loss change when the $i$-th dimension of the input changes no more than $\epsilon$, which is defined as:
\begin{align}
\mathcal{R}^{(i)}_\text{adv}(\epsilon):=\mathbb{E}\big[\max\limits_{|\delta_i|\le\epsilon, \delta_{j}=0 (i\ne j)}\mathcal{L}(f(\vect{x}+\bm\delta, \bm\theta), y)-\mathcal{L}(f(\vect{x}, \bm\theta), y)\big].
\label{eq:robustness}
\end{align}

The sensitivities of input dimensions reveal how the perturbations on different dimensions can cause the loss to increase. Neural networks pay more attention to dimensions with higher sensitivities and these dimensions tend to be relatively important dimensions for models to make decisions. Therefore, dimension-wise sensitivities can help reveal the decision basis of the model $f$ and the importance of different dimensions in the prediction process. In traditional factor analysis methods, the factor loading~\citep{factor_loading} measures the importance of a factor, which can be seen as the weight of the factor in a linear model roughly. The proposed sensitivity does not only cover traditional measurements in the linear model, as illustrated in Proposition~\ref{prop:linear}, but also evaluates the importance of different dimensions in the nonlinear neural networks, as illustrated in Proposition~\ref{prop:risk_non_linear}. We put the proofs in Appendix.

\begin{prop}
Consider a linear model $f(\vect{x}, \bm\theta)=\bm\theta^\text{T}\vect{x}+b$. Suppose the loss function $\mathcal{L}(\hat y,y)$ is a function $\ell(\hat y)$ with respect to $\hat y=f(\vect{x}, \bm\theta)$ and $M=\sup|\ell''|$ exists. Then there exists $C>0$, such that:
\begin{align}
\mathcal{R}^{(i)}_\text{adv}(\epsilon)=C\epsilon|\theta_i|+ O\big(\frac{M\theta_i^2\epsilon^2}{2}\big),\quad (i=1,2,\cdots,k),
\end{align}
namely $\mathcal{R}^{(i)}_\text{adv}(\epsilon)\propto|\theta_i|$ holds approximately when $\epsilon$ is small.
\label{prop:linear}
\end{prop}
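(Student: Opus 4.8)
The plan is to exploit the fact that, because only the $i$-th coordinate is perturbed and $f$ is affine, the inner maximization in Eq.~(\ref{eq:robustness}) collapses to a one-dimensional problem that a second-order Taylor expansion of $\ell$ dispatches cleanly. First I would write $\bm\delta=\delta_i\vect{e}_i$ with $|\delta_i|\le\epsilon$, where $\vect{e}_i$ is the $i$-th standard basis vector, so that $f(\vect{x}+\bm\delta,\bm\theta)=\bm\theta^\text{T}\vect{x}+b+\theta_i\delta_i=\hat y+\theta_i\delta_i$. Hence the quantity inside the expectation defining $\mathcal{R}_i(\epsilon)$ becomes $\max_{|\delta_i|\le\epsilon}\big(\ell(\hat y+\theta_i\delta_i)-\ell(\hat y)\big)$, a scalar optimization in the single variable $\delta_i$.

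Next I would Taylor-expand $\ell$ about $\hat y$ with a Lagrange remainder: for each admissible $\delta_i$ there is $\xi$ between $\hat y$ and $\hat y+\theta_i\delta_i$ with
\begin{align}
\ell(\hat y+\theta_i\delta_i)-\ell(\hat y)=\ell'(\hat y)\,\theta_i\delta_i+\tfrac{1}{2}\ell''(\xi)\,\theta_i^2\delta_i^2 .
\end{align}
Since $|\ell''|\le M$ and $|\delta_i|\le\epsilon$, the remainder is bounded in absolute value by $\tfrac{1}{2}M\theta_i^2\epsilon^2$ uniformly in $\delta_i$. The linear term $\ell'(\hat y)\theta_i\delta_i$ is maximized over $|\delta_i|\le\epsilon$ by $\delta_i=\epsilon\,\text{sgn}(\ell'(\hat y)\theta_i)$, attaining value $\epsilon|\theta_i||\ell'(\hat y)|$. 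A sandwich argument — substituting this $\delta_i$ for a lower bound and bounding the remainder crudely for an upper bound — then yields
\begin{align}
\max_{|\delta_i|\le\epsilon,\,\delta_{\ne i}=0}\big(\ell(\hat y+\theta_i\delta_i)-\ell(\hat y)\big)=\epsilon|\theta_i||\ell'(\hat y)|+O\big(\tfrac{M\theta_i^2\epsilon^2}{2}\big),
\end{align}
where the constant hidden in $O(\cdot)$ is at most $1$ and, crucially, is uniform in $\vect{x}$.

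Finally I would take the expectation over $(\vect{x},y)\sim\mathcal{D}$. Because the error term is dominated pointwise by the deterministic quantity $\tfrac{1}{2}M\theta_i^2\epsilon^2$, it survives the expectation without change of order, giving $\mathcal{R}_i(\epsilon)=\epsilon|\theta_i|\,\mathbb{E}\big[|\ell'(\hat y)|\big]+O\big(\tfrac{M\theta_i^2\epsilon^2}{2}\big)$. Setting $C:=\mathbb{E}_{(\vect{x},y)\sim\mathcal{D}}\big[|\ell'(\bm\theta^\text{T}\vect{x}+b)|\big]$, which depends on $\ell$, $\bm\theta$, $b$, and $\mathcal{D}$ but \emph{not} on $i$, finishes the argument, and $\mathcal{R}_i(\epsilon)\propto|\theta_i|$ for small $\epsilon$ follows by discarding the $O(\epsilon^2)$ term.

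The steps are elementary, so the only points a careful reader will want pinned down are the ones guaranteeing $C$ is well-behaved: first, $\mathbb{E}[|\ell'(\hat y)|]<\infty$ so that $C$ is defined at all (automatic, e.g., when $\ell'$ is bounded or $\mathcal{D}$ has bounded support); and second, the non-degeneracy condition that $\ell'(\hat y)$ is not almost surely zero on $\mathcal{D}$ — equivalently, the model is not already at a loss minimum over the whole data distribution — which is exactly what is needed for the stated $C>0$. I expect this second point to be the main thing worth making explicit; everything else is the routine Taylor estimate above.
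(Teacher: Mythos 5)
Your proposal is correct and follows essentially the same route as the paper's own proof: a Lagrange-form Taylor expansion of $\ell$ in the single perturbed coordinate, maximization of the linear term at $\delta_i=\epsilon\,\text{sgn}(\ell'(\hat y)\theta_i)$, and the choice $C=\mathbb{E}_{(\vect{x},y)\sim\mathcal{D}}\big[|\ell'(f(\vect{x},\bm\theta))|\big]$. Your sandwich argument for the remainder and your explicit remarks on when $C$ is finite and strictly positive are in fact slightly more careful than the paper's version, which states the maximum as an exact equality without addressing these points.
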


Proposition~\ref{prop:linear} illustrates that the proposed sensitivity is proportional to the absolute value of the weight or the factor loading in a linear model. In our work, we adopt the MSE loss, $\mathcal{L}(\hat y,y)=(\hat y-y)^2$ and $M=2$.

\begin{prop}
Consider a neural network that its loss function is convex and $M$-smooth in the neighborhood of $\bm\theta$ \footnote{Note that $\mathcal{L}$ is only required to be convex and $M$-smooth in the neighborhood of $\bm\theta$ instead of the entire $\mathbb{R}^k$.}. Then, 
\begin{align}
\mathcal{R}^{(i)}_\text{adv}(\epsilon)=\epsilon\big(\mathbb{E}| g_i|\big)+ O\big(\frac{M\epsilon^2}{2}\big),\quad (i=1,2,\cdots,k),
\end{align}
where ${\vect{g}}=\nabla_{\vect{x}}\mathcal{L}( f(\vect{x},\bm\theta), y)$ and its $i$-th dimension is $g_i$. Namely $\mathcal{R}^{(i)}_\text{adv}(\epsilon)\propto\mathbb{E}|g_i|$ holds approximately when $\epsilon$ is small.
\label{prop:risk_non_linear}
\end{prop}

Proposition~\ref{prop:risk_non_linear} illustrates that the proposed dimension-wise sensitivities in neural networks are approximately proportional to the gradients.

%%%%%%%%%%%%%%%%%%%%%%%%%%%%%%%%%%%%%%%%%%%%%%%%%%%%%%%%%%%%%%%%%%%%%%%%%%%%%%%%%%%%%%%%%%%%%%%%%%%%%%%%%%%%%%%%%%%%%%%%%%%%%%%
\begin{figure*}[!t]
\centering
\subcaptionbox{$\|\bm\delta\|_2\le\epsilon$\label{fig:norm1}, circle.}{\includegraphics[height=1.7 in,width=0.24\linewidth]{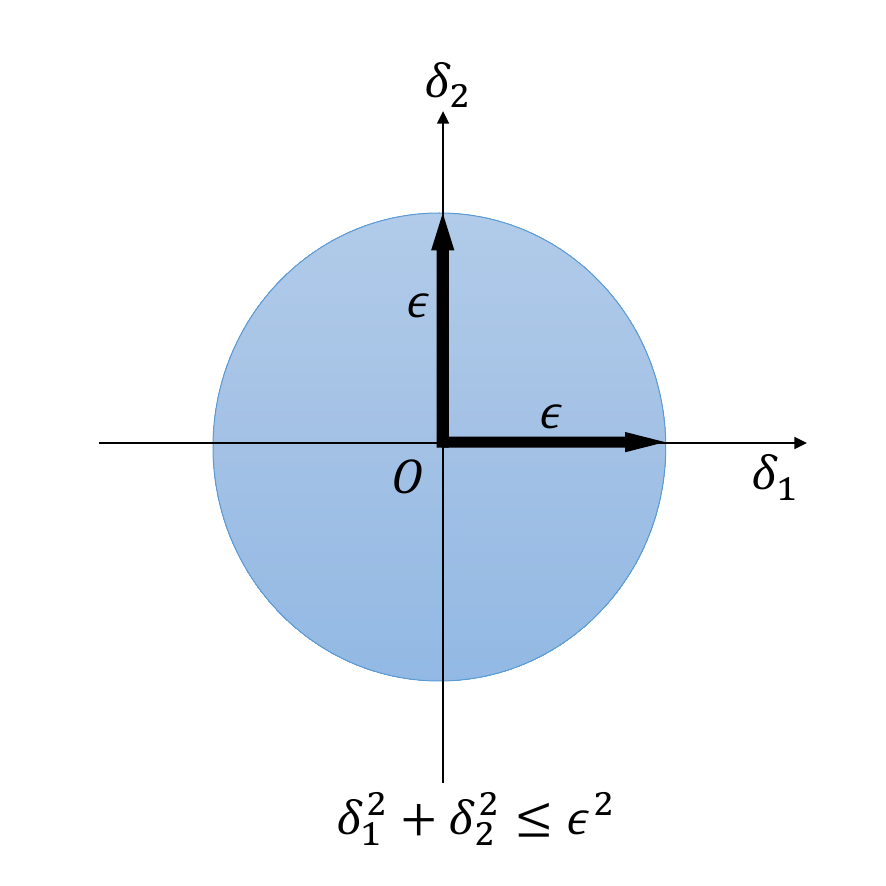}
}
\hfil
\subcaptionbox{$\|\bm\alpha^{-1}\odot\bm\delta\|_2\le\epsilon$, ellipse.\label{fig:norm2}}{\includegraphics[height=1.7 in,width=0.24\linewidth]{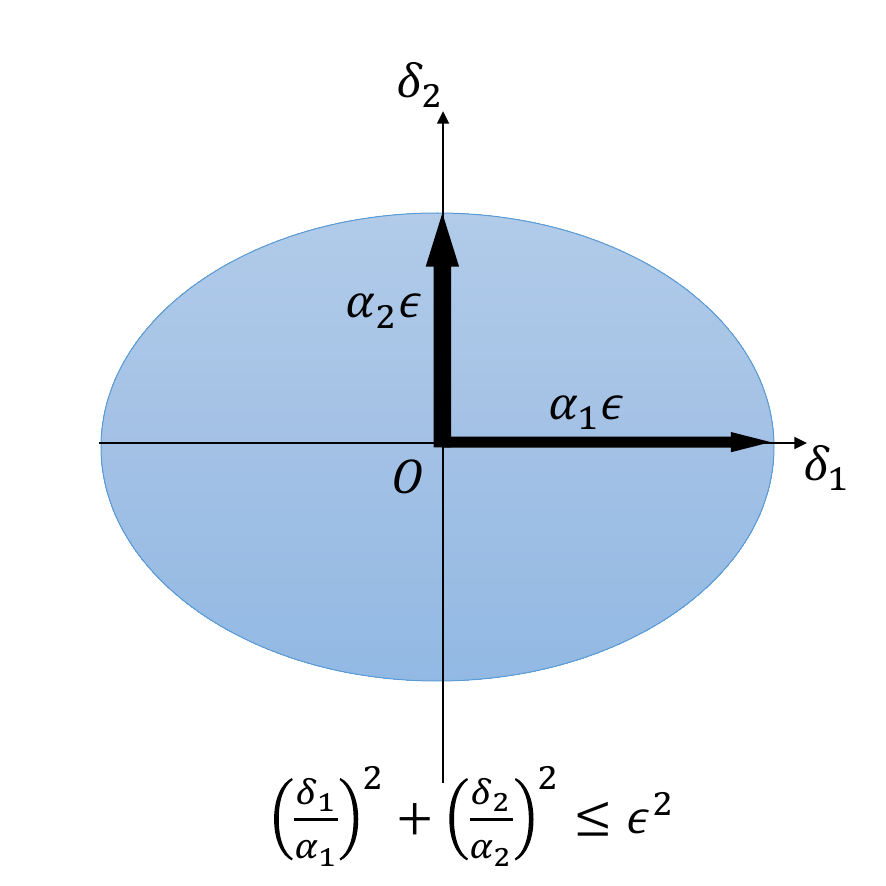}}
\hfil
\subcaptionbox{$\|\bm\delta\|_{+\infty}\le\epsilon$, square.\label{fig:norm3}}{\includegraphics[height=1.7 in,width=0.24\linewidth]{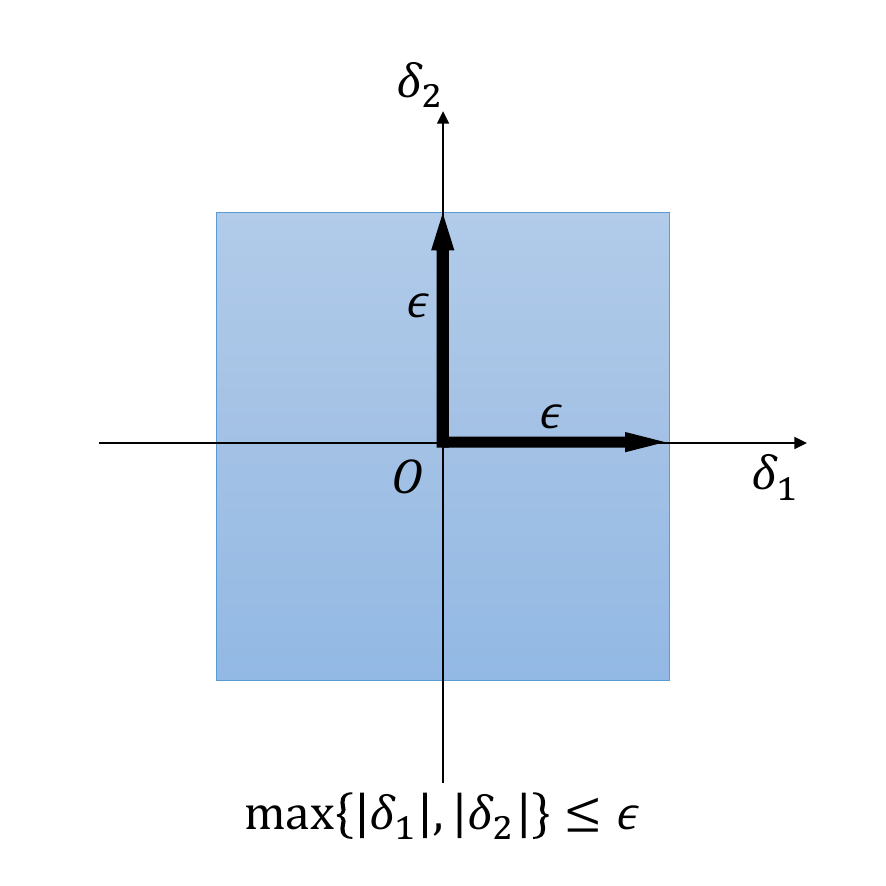}}
\hfil
\subcaptionbox{$\|\bm\alpha^{-1}\odot\bm\delta\|_{+\infty}\le\epsilon$, rectangle.\label{fig:norm4}}{\includegraphics[height=1.7 in,width=0.24\linewidth]{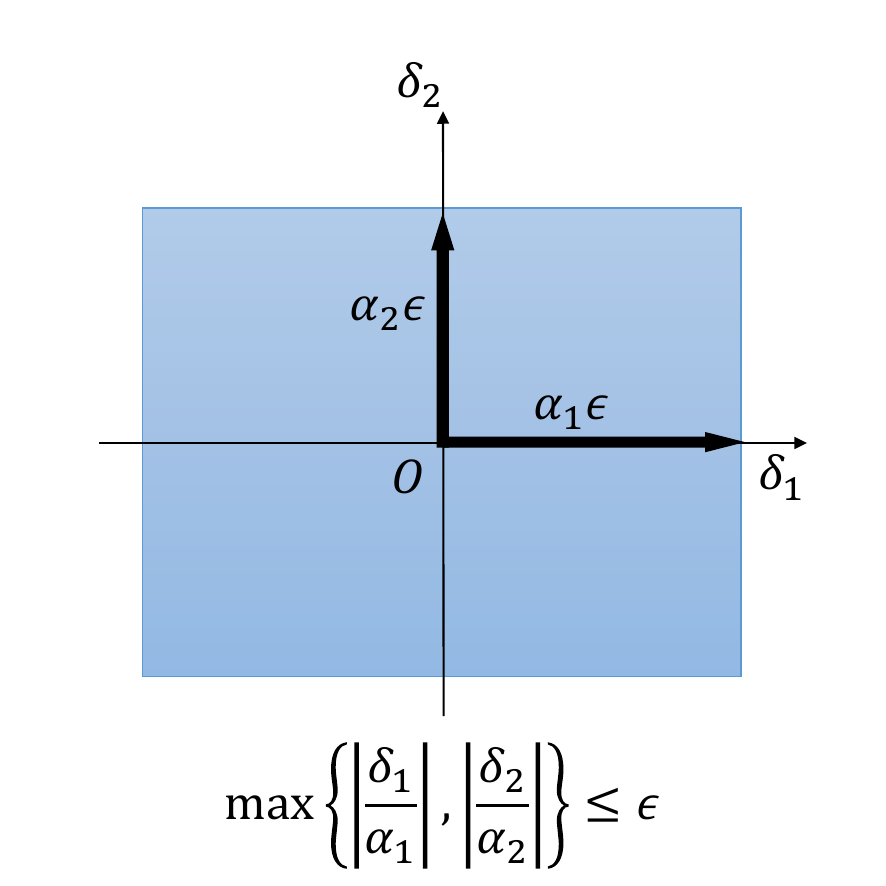}}
\caption{
Illustration of the regions that $L_2$-norm and $L_{+\infty}$ constraint specify and regions after rescaling.
}
\label{fig:norm}
\end{figure*}
%%%%%%%%%%%%%%%%%%%%%%%%%%%%%%%%%%%%%%%%%%%%%%%%%%%%%%%%%%%%%%%%%%%%%%%%%%%%%%%%%%%%%%%%%%%%%%%%%%%%%%%%%%%%%%%%%%%%%%%%%%%%%%%

\section{Methodology}

In this section, we first rethink the constraint set in adversarial training. Then, we propose to rescale different dimensions of the perturbation in the constraint set. Last, we propose the adaptively scaled adversarial training (ASAT) algorithm.

\subsection{Rescaling the Constraint in Adversarial Training}

In this section, we rethink the constraint in adversarial training in time series and propose to rescale the constraint.

\subsubsection{Rethinking the Constraint in Adversarial Training} 

Adversarial training algorithms are widely adopted in the CV~\citep{Intriguing_properties_of_neural_networks,Explaining_and_Harnessing_Adversarial_Examples,Towards_Evaluating_the_Robustness_of_Neural_Networks,YOPO}  and  NLP~\citep{Robsut_Translation_Doubly_Adversarial,freeLB,Robust_Machine_Translation} fields. The widely-adopted $L_p$-norm constraint, $S=\{\bm\delta:\|\bm\delta\|_p\le\epsilon\}$, is invariant about different dimensions of the perturbation, which is reasonable in both the CV and NLP fields. 

In the CV field, an important hypothesis of convolution neural networks is shift-invariant~\citep{Convolutional_Networks_Shift-Invariant_Again,Truly_shift-invariant_convolutional_neural_networks} and different pixels in an image can be treated of equal significance. In the NLP field, adversarial attacks can be conducted on the word embeddings and different dimensions of word embeddings are of similar significance~\citep{hotflip,freeLB}.

However, in time series, different dimensions of inputs are not symmetrical. Therefore, directly adopting the $L_p$-norm constraint in adversarial training in time series is not reasonable. 

\subsubsection{Rescaling the Constraint in Adversarial Training}

Scales of all dimensions of the $L_p$-norm bounded constraint are the same. However, in time series, dimensions with different timestamps may be of different significance. The time series is usually defined as $\vect{X}=(\vect{v}^{(1)}, \vect{v}^{(2)}, \cdots, \vect{v}^{(T)})$, where $\vect{v}^{(t)}\in \mathbb{R}^V$ is the input vector at the $t$-th timestamp and a smaller $t$ means an earlier timestamp. For example, $T=2$, $\vect{v}^{(1)}$ is the input vector at the first timestamp \textit{9:00, Jan. 21}, and $\vect{v}^{(2)}$ is the input vector at the second timestamp \textit{10:00, Jan. 21}. In our work, we flatten a time series into a vector $\vect{x}=(x_1, x_2, \cdots, x_k)^\text{T}\in \mathbb{R}^k$, where $k=TV$ and $x_{(t-1)V+j}=\text{v}^{(t)}_j$ is from the input vector with the $t$-th timestamp. Suppose $x_i$ is from the input vector with the $t_i$-th timestamp, namely $t_i=\lceil\frac{i}{V}\rceil$, where $\lceil\cdot\rceil$ is the ceil function.

Suppose $\bm\alpha=(\alpha_1, \alpha_2, \cdots, \alpha_k)^\text{T}$, we multiply the scale of dimension $i$ of the perturbation with $\alpha_i$ to rescale the radius of dimension $i$ from $\epsilon$ to $\alpha_i\epsilon$. As illustrated in Fig.~\ref{fig:norm}, the $L_2$-norm bounded constraint specifies a $k$-dimensional hyperball in $\mathbb{R}^k$ (a circle in $\mathbb{R}^2$, as shown in Fig.~\ref{fig:norm1}) and the $L_{+\infty}$-norm bounded constraint specifies a $k$-dimensional hypercube in $\mathbb{R}^k$ (a square in $\mathbb{R}^2$, as shown in Fig.~\ref{fig:norm2}). After rescaling, the hyperball is transformed into a hyperellipsoid (an ellipse in $\mathbb{R}^2$, as shown in Fig.~\ref{fig:norm3}) and the hypercube is transformed into a hypercuboid (a rectangle in $\mathbb{R}^2$, as shown in Fig.~\ref{fig:norm4}). The rescaling process assigns different importance to different dimensions of perturbations in adversarial training.  Generally, the $L_p$-norm constraint $\|\bm\delta\|_p\le\epsilon$ is transformed to the constraint:
\begin{align}
\|\bm\alpha^{-1}\odot\bm\delta\|_p=\left(\sum\limits_{i=1}^{k}\big|\frac{\delta_i}{\alpha_i}\big|^p\right)^\frac{1}{p}\le\epsilon.
\end{align}

\subsubsection{Fast Gradient Methods and PGD Algorithms after Rescaling}

After rescaling, the constraint set is $S=\{\bm\delta:\|\bm\alpha^{-1}\odot\bm\delta\|_p\le\epsilon\}$. For the fast gradient method target, namely maximizing the inner product of $\bm \delta$ and the gradient $\vect{g}_{k-1}$, the solution Eq.(\ref{eq:fast_gradient}) becomes:
\begin{align}
\bm\delta = \argmax_{\bm\delta\in S}\bm\delta^{\text{T}}\vect{g}_{k-1}=\epsilon\big(\bm\alpha\odot\text{sgn}(\vect{g}_{k-1})\big)\odot\frac{|\bm\alpha\odot\vect{g}_{k-1}|^\frac{1}{p-1}}{\||\bm\alpha\odot\vect{g}_{k-1}|^\frac{1}{p-1}\|_p}.
\label{eq:fast_gradient_new}
\end{align}

For the projection function $\Pi_S(\vect{v})$ that projects $\vect{v}$ into the set $S$,  we adopt the variants of Eq.(\ref{eq:projection_L2}) and Eq.(\ref{eq:projection_Linf}):
\begin{align}
\Pi_{\{\vect{v}:\|\bm\alpha^{-1}\odot\vect{v}\|_2\le\epsilon\}}{(\vect{v})} =& \min\{\|\bm\alpha^{-1}\odot\vect{v}\|_2,\epsilon\}\frac{\vect{v}}{\|\bm\alpha^{-1}\odot\vect{v}\|_2} \label{eq:projection_new_L2},\\
\Pi_{\{\vect{v}:\|\bm\alpha^{-1}\odot\vect{v}\|_{+\infty}\le\epsilon\}}{(\vect{v})} &= \bm\alpha\odot\text{clip}(\bm\alpha^{-1}\odot\vect{v},-\epsilon, \epsilon) \label{eq:projection_new_Linf}.
\end{align}

The proof and theoretical details are in Appendix.

%%%%%%%%%%%%%%%%%%%%%%%%%%%%%%%%%%%%%%%%%%%%%%%%%%%%%%%%%%%%%%%
\begin{algorithm}[!t]
   \caption{Multi-step Risk Averaging Adversarial Training Algorithm with Dimension-wise Scales}
   \label{alg:freeLB}
\begin{algorithmic}[1]
    \REQUIRE Network $f$; parameters $\bm \theta \in\mathbb{R}^k$; loss $\mathcal{L}$ and dataset $\mathcal{D}$; constraint set $S$; step number $K$ and step size $\tau$; optimizer $\mathcal{O}$; iterations; batch size $|\mathcal{B}|$.
    \STATE Initialize $\bm \theta$.
    \WHILE {Training}
    \STATE Generate a batch $\mathcal{B}$.
    \FOR {$(\vect{x}, y) \in \mathcal{B}$}
    \STATE $\bm\delta_0\gets \vect{0}_k.$
    \STATE  Calculate the risk $\mathcal{L}(f(\vect{x}+\bm\delta_{0}, \bm \theta), y)$ and the gradient $\vect{g}_{0}=\vect{g}=\nabla_{\vect{x}}\mathcal{L}( f(\vect{x},\bm\theta), y)$.
    \STATE Calculate time-dependent or adaptive scales $\alpha_i$.
    \FOR {k = 1 to $K$}
    \STATE Generate $\vect{u}_{k}=\argmax_{\vect{u}\in S_1}\vect{u}^{\text{T}}\vect{g}_{k-1}$, where $S_1=\{\vect{u}:\|\bm\alpha^{-1}\odot\vect{u}\|_p\le \tau\}$, according to Eq.(\ref{eq:fast_gradient_new}) with step size $\tau$.
    \STATE Generate $\bm\delta_{k}= \Pi_S(\bm\delta_{k-1} + \vect{u}_{k})$, where $S=\{\bm\delta:\|\bm\alpha^{-1}\odot\bm\delta\|_p\le \epsilon\}$,according to Eq.(\ref{eq:PGD}), Eq.(\ref{eq:projection_new_L2}) and Eq.(\ref{eq:projection_new_Linf}).
    \STATE  Calculate the risk $\mathcal{L}(f(\vect{x}+\bm\delta_{k}, \bm \theta), y)$ and the gradient $\vect{g}_{k}=\nabla_{\vect{x} + \bm\delta_{k}}\mathcal{L}( f(\vect{x} + \bm\delta_{k},\bm\theta), y)$.
    \ENDFOR
    \ENDFOR
    \STATE Update $\bm\theta$ with optimizer $\mathcal{O}$ and gradients $\{\vect{g}_{k}\}_{k=0}^{K}$ according to Eq.(\ref{eq:FreeLB}).
    \ENDWHILE
\end{algorithmic}
\end{algorithm}
%%%%%%%%%%%%%%%%%%%%%%%%%%%%%%%%%%%%%%%%%%%%%%%%%%%%%%%%

\subsection{Time-dependently Scaled and Adaptively Scaled Adversarial Training}

In this section, we propose two lines of adversarial training with time-dependent or adaptive scales. 

\subsubsection{Multi-step Risk Averaging Adversarial Training}
To take full advantage of the gradients of the adversarial examples generated at all steps, we adopt a multi-step risk averaging adversarial training as illustrated in Algorithm~\ref{alg:freeLB}. The target of the proposed algorithm is minimizing the average risk of both the clean loss and the adversarial sensitivities in multiple steps. Following \cite{freeLB}, we optimize the risk on a batch $\mathcal{B}$ by minimizing the average risks of multiple adversarial examples generated during the PGD process:
\begin{align}
\bm\theta=\argmin\limits_{\bm\theta}\mathbb{E}_{(\vect{x},y)\in\mathcal{B}}\big[\frac{1}{K+1}\sum\limits_{k=0}^K\mathcal{L}(f(\vect{x}+\bm\delta_{k}, \bm \theta), y)\big].
\label{eq:FreeLB}
\end{align}

\subsubsection{Time-dependently Scaled Adversarial Training}

Adaptive scales $\alpha_i$ are determined by the $t_i$, we require $\alpha_i\in [0, 1]$. Since in time series analysis, data of closer timestamps with larger $t_i$ may be more important, We may assume $\alpha_i\le\alpha_j$ when $t_i<t_j$, and $\max\limits_{1\le i\le k}\alpha_i=1$. The scale $\alpha_i$ decays when $t_i$ decreases.

We consider three simple decay functions:
\begin{itemize}
    \item \textbf{Constant (Const).} $\alpha_i=1$.
    \item \textbf{Exponential Decay (Exp).} $\alpha_i=\gamma^{T-t_i}$.
    \item \textbf{Linear Decay (Linear).} $\alpha_i=1-\gamma(T-t_i)$.
\end{itemize}
where the hyperparameter $\gamma$ controls the decaying speed, and $T$ denotes the number of different timestamps. Traditional adversarial training can be seen as adversarial training with constant scales.

\subsubsection{Adaptively Scaled Adversarial Training}
\label{sec:asat}

Time-dependent scales $\alpha_i$ are determined by the $i$-th dimension in the data instance $x_i$ and its gradient $g_i=\nabla_{x_i}\mathcal{L}( f(\vect{x},\bm\theta), y)$, namely $x_i$ is the $i$-th dimension of $\vect{x}$ and $g_i$ is the $i$-th dimension of $\vect{g}=\nabla_{\vect{x}}\mathcal{L}(f(\vect{x},\bm\theta), y)$. Similarly to time-dependent scales, we may assume that $0\le \alpha_i \le 1$ and $\max\limits_{1\le i\le k}\alpha_i=1$.

The ideal attack scales should relate to the scales of data instance, which will be analyzed later in detail in Theorem~\ref{thm:generalization_error} and Sec.~\ref{sec:analysis} that the ideal attack scales $\bm\beta$ should be proportional to the shifts between the training and test data. We may assume that the ideal attack scales or the data shifts are proportional to the scale of data instance $|x_i|$, namely:
\begin{align}
    \alpha_i\propto |x_i|\label{eq:data}.
\end{align}

As analyzed in Proposition~\ref{prop:risk_non_linear}, when conducting adaptive scales, the dimension-wise adversarial sensitivities are $\mathcal{R}^{(i)}_\text{adv}(\alpha_i\epsilon)=\alpha_i\epsilon\big(\mathbb{E}|g_i|\big)+ O\big(\frac{M\epsilon^2}{2}\big)\propto \alpha_i\mathbb{E}|g_i|$. In traditional adversarial training with constant scales $\alpha_i=1$, dimensions with larger $|g_i|$ have higher dimension-wise adversarial sensitivities. With constant scales, the loss change caused by dimensions with larger $|g_i|$ can easily cause the learning collapse while dimensions with smaller $|g_i|$ only cause the loss to increase a little. Neural networks pay more attention to dimensions with higher $|g_i|$ and may overfit these dimensions. To settle this issue and alleviate potential overfitting, we propose to balance the dimension-wise adversarial sensitivities of different dimensions, namely:
\begin{align}
    \alpha_i\propto \frac{1}{|g_i|},
\end{align}
combined with Eq.(\ref{eq:data}), we may assume $\alpha_i\propto \frac{|x_i|}{|g_i|}$, namely $\alpha_i=\frac{|x_i|}{C|g_i|}$, where $C=\max\limits_{1\le i\le k}\frac{|x_i|}{|g_i|}$. In our implementation, we adopt:
\begin{align}
    \alpha_i=\frac{1}{\max\limits_{1\le i\le k}\frac{|x_i|+e}{|g_i|+e}}\frac{|x_i|+e}{|g_i|+e},
    \label{eq:scales}
\end{align}
where we add a small positive real number $e=10^{-6}$ on $|x_i|$ and $|g_i|$ for numerical stability. 

As shown in Algorithm~\ref{alg:freeLB}, we calculate adaptive scales $\alpha_i$ (line 7) after calculating the gradient $\vect{g}$ (line 6). Therefore, the calculation of adaptive scales $\alpha_i$ only involves Eq.(\ref{eq:scales}) with  the calculated $\vect{g}$, and does not require extra neural network forward or backward propagation compared to traditional adversarial training. The extra computational cost is negligible compared to the forward and backward propagation cost of neural networks.

\subsection{Theoretical Analysis}

In this section, we first analyze the adversarial sensitivity under our proposed adaptively scaled adversarial attacks. Then, we reveal that the adversarial sensitivity under our proposed adaptively scaled adversarial attacks determines the generalization bound when the training and test distributions are not IID. 

\subsubsection{Adversarial Sensitivity after Rescaling}
We show the adversarial sensitivity or risk under our proposed adaptively scaled adversarial attacks after rescaling in Theorem~\ref{thm:risk_rescaling}.

\begin{thm}
Consider a neural network that is convex and $M$-smooth in the neighborhood of $\bm\theta$, suppose $\vect{g}$ denotes the gradient, $\vect{g}=\nabla_{\vect{x}}\mathcal{L}( f(\vect{x},\bm\theta), y)$, then its adversarial sensitivity or risk under our proposed adaptively scaled adversarial attacks where $S=\{\bm\delta:\|\bm\alpha^{-1}\odot\bm\delta\|_p\le\epsilon\}$ is:
\begin{align}
\mathcal{R}_\text{adv}(S)=\big(1+o(1)\big)\epsilon\big(\mathbb{E}\|\bm\alpha\odot{\vect{g}}\|_\frac{p}{p-1}\big).
\end{align}
\label{thm:risk_rescaling}
\end{thm}

\subsubsection{Generalization Bound and Adversarial Sensitivity}
\label{sec:analysis}

Let $\vect{z}=(\vect{x}, y)$, we may assume the training distribution $\mathcal{P}:=p(\vect{z})$ and the test distribution $\mathcal{Q}:=q(\vect{z})$ are two close distributions but there exists a distributional shift between them. Assume $\phi:\mathbb{R}^k\to\mathbb{R}^k$ maps a training instance $\vect{z}$ into a test instance $\phi(\vect{z})$, namely $q(\phi(\vect{z}))|\mathcal{J}(\phi(\vect{z}))|=p(\vect{z})$, where $\mathcal{J}$ is the Jacobi determinant. Define $\mathcal{L}_\mathcal{P}:=\mathbb{E}_{(\vect{x}, y)\sim p(\vect{x}, y)}[\mathcal{L}(f(\vect{x},\bm\theta), y)]$, $\bm\beta:=\mathbb{E}_{\vect{z}\sim q(\vect{z})}[\vect{z}]-\mathbb{E}_{\vect{z}\sim p(\vect{z})}[\vect{z}]=\mathbb{E}_{\vect{z}\sim p(\vect{z})}[\phi(\vect{z})-\vect{z}]$, and $\mathcal{R}_\text{adv}(S)$ is the adversarial sensitivity under adaptively scaled adversarial attacks on $\mathcal{P}$. Define the $l$-th order distributional shift distance as:
\begin{align}
D_l:=\mathbb{E}_{\vect{z}\sim p(\vect{z})}\|\phi(\vect{z})-\vect{z}\|^l.
\end{align}

For example, when the training and test distributions are two multivariate Gaussian distributions, namely $\mathcal{N}(\bm{\mu}_p,\vect{\Sigma}_p)$ and $\mathcal{N}(\bm{\mu}_q,\vect{\Sigma}_q)$ respectively, then $\bm\beta=\bm{\mu}_q-\bm{\mu}_p$, $\phi(\vect{z})=\bm{\mu}_q+\vect{\Sigma}_q\vect{\Sigma}_p^{-1}(\vect{z}-\bm{\mu}_p)$. When training and test distributions are close, namely $\|\bm\mu_p-\bm\mu_q\|_2$ and $\|\vect{\Sigma}_q\vect{\Sigma}_p^{-1}-\bm{I}\|_2$ are bounded, $D_l:=\mathbb{E}_{\vect{z}\sim p(\vect{z})}\|\phi(\vect{z})-\vect{z}\|^l=\mathbb{E}_{\vect{z}\sim p(\vect{z})}\|(\bm{\mu}_q-\vect{\Sigma}_q\vect{\Sigma}_p^{-1}\bm{\mu}_p)+(\vect{\Sigma}_q\vect{\Sigma}_p^{-1}-\vect{I})\vect{z}\|^l=\sum\limits_{i=0}^{l} \binom{l}{i}\times\|\bm{\mu}_q-\vect{\Sigma}_q\vect{\Sigma}_p^{-1}\bm{\mu}_p\|^i\|\vect{\Sigma}_q\vect{\Sigma}_p^{-1}-\vect{I}\|_2^{l-i}\mathbb{E}_{\vect{z}\sim p(\vect{z})}\big(\|\vect{z}\|^{l-i}\big)$ are also bounded.

Based on previous PAC-Bayes bounds~\citep{Some_PAC-Bayesian_Theorems}, we provide a theoretical analysis in Theorem~\ref{thm:generalization_error}.

%%%%%%%%%%%%%%%%%%%%%%%%%%%%%%%%%%%%%%%%%%%%%%%%%%%%%%%%%%%%%%%%%%%%%%%%%%%%%%%%%%%%%%%
\begin{thm}
\label{thm:generalization_error}
Suppose the training distribution $\mathcal{P}:=p(\vect{z})$ and the test distribution $\mathcal{Q}:=q(\vect{z})$ are two close distributions that have definite first and second order distributional shift distances, namely $D_1<+\infty, D_2<+\infty$, for any loss function $\mathcal{L}$ that maps an instance to $[0, 1]$ and is convex and $M$-smooth in the neighborhood of $\bm\theta$, with probability 1-$\omega$ over the choice of the training set $\mathcal{D}\sim \mathcal{P}$, the following bound holds:
\begin{align}
\mathcal{L}_\mathcal{Q}\le
\mathcal{L}_\mathcal{D}+\frac{\big(1+o(1)\big)k^\frac{1}{p}\mathcal{R}_\text{adv}(S)}{\epsilon}+\frac{MD_2}{2}+\text{Rem},
\end{align}
where $S=\{\bm\delta:\|\bm\beta^{-1}\odot\bm\delta\|_p\le\epsilon\}$, and the remainder term is $\text{Rem}=\sqrt{\frac{\log\frac{|\mathcal{D}|}{\omega}+2\log|\mathcal{D}|}{2|\mathcal{D}|}}+\frac{1}{|\mathcal{D}|}$. 
\end{thm}
%%%%%%%%%%%%%%%%%%%%%%%%%%%%%%%%%%%%%%%%%%%%%%%%%%%%%%%%%%%%%%%%%%%%%%%%%%%%%%%%%%%%%%%%

When $|\mathcal{D}|\to+\infty$, we have $\text{Rem}\to 0$, then the generalization bound is mainly determined by $\frac{\big(1+o(1)\big)k^\frac{1}{p}\mathcal{R}_\text{adv}(S)}{\epsilon}+\frac{MD_2}{2}$. Theorem~\ref{thm:risk_rescaling} and Theorem~\ref{thm:generalization_error} reveal that the generalization bound $\mathcal{L}_\mathcal{Q}$ relates to the adversarial sensitivity $\mathcal{R}_\text{adv}(S)$ under adaptively scaled adversarial attacks. Traditional adversarial training algorithms with constant scales optimize the adversarial sensitivities under $L_p$ constrained adversarial attacks, while the proposed ASAT optimizes the adversarial sensitivities under adaptively scaled adversarial attacks, which are more precise estimations of the generalization bound. It implies that adaptive scales may help alleviate overfitting and improve the generalization ability compared to traditional adversarial training. Since $\bm\beta$ is unknown, we adopt adaptive scales determined by data scales and gradients in Sec.~\ref{sec:asat}. 
 
\section{Experiments}

In this section, we first introduce the task and dataset details, then introduce the baseline models and experimental settings. Last, we report experimental results.

\subsection{Tasks and Datasets}

\subsubsection{Volume prediction task}

We choose the volume prediction task, the input data consists of log prices and log volumes of previous $12$ time slots and the same time slots in previous $20$ trading days. The prices include open, close, high, and low prices. The input data consists of $32\times 4$ log prices and $32\times 1$ log volumes. The target is to regress the log volume. In our experiments, if the data $x_l$ of the $l$-th dimension is from the $i$-th farthest time slot or the $j$-th farthest day in history, we set $t_l=i$ or $t_l=j$, respectively.
\subsubsection{Datasets and data preprocessing}

For our research, we adopt the \textbf{hourly} inter-day volume prediction dataset and the \textbf{five-minute} intra-day volume prediction dataset. The two datasets are extracted from the price and volume data of the Topix500 (price index of the 500 most liquid and highly market capitalized stocks in Tokyo Stock) between \textit{Jan. 2017} and \textit{Feb. 2018}. We adopt the data of \textit{2017} as the training set and development set, and the data of \textit{Jan. 2018} and \textit{Feb. 2018} as the test set. The training set and the development set are randomly split with a ratio of $3:1$. The statistics of datasets are in Table~\ref{tab:dataset}.

%%%%%%%%%%%%%%%%%%%%%%%%%%%%%%%%%%%%%%%%%%%%%%%%%%%%%%%%%%%%%%%%%%%%%%%%%%%%%%%%%%%%%%%%
\begin{table}[!ht]
\caption{Statistical information on the two datasets.}
\scriptsize
\label{tab:dataset}
\setlength{\tabcolsep}{5pt}
\centering
\begin{tabular}{@{}lcccccc@{}}
\toprule
 \bf Dataset & \multicolumn{3}{c}{\bf Hourly} & \multicolumn{3}{c}{\bf Five-minute}\\ 
 \midrule
 \bf Split & Train &  Dev  & Test & Train & Dev & \bf Test \\ 
 \midrule
\bf Samples & 49,728 & 16,554 & 26,818 & 106,784 & 35,552 &  27,584\\
 \bottomrule
\end{tabular}
\end{table}
%%%%%%%%%%%%%%%%%%%%%%%%%%%%%%%%%%%%%%%%%%%%%%%%%%%%%%%%%%%%%%%%%%%%%%%%%%%%%%%%%%%%%%%%

For example, in the hourly dataset, if the ground truth is the log volume of \textit{10:00-11:00, Jan. 21}, whose timestamp is defined as the begin timestamp of the slot \textit{10:00, Jan. 21}, then the previous 12 time slots are: \textit{9:00-10:00, Jan. 21}, \textit{14:00-15:00, Jan. 20}, etc, and the history 20 days are: \textit{10:00-11:00, Jan. 20}, \textit{10:00-11:00, Jan. 19}, etc. It is similar to the data preprocessing of the five-minute dataset. We deleted the data instances consisting of missing volumes or prices. In the five-minute dataset, the previous 12 time slots are collected from the same day to the ground truth. 

\subsubsection{Evaluation metrics}

We adopt four evaluation metrics: mean squared error (MSE), root mean squared error (RMSE), mean absolute error (MAE), and accuracy (ACC). Suppose $\hat y=f(\vect{x}, \bm\theta)$ and $y$ is the ground truth, then these metrics are defined as:
\begin{align}
    \text{MSE}&=\mathbb{E}_{(\vect{x}, y)\sim\mathcal{D}}(\hat y - y)^2,\\
    \text{RMSE}&=\sqrt{\mathbb{E}_{(\vect{x}, y)\sim\mathcal{D}}(\hat y - y)^2},\\
    \text{MAE}&=\mathbb{E}_{(\vect{x}, y)\sim\mathcal{D}}|\hat y - y|,\\
    \text{ACC}&=\mathbb{P}_{(\vect{x}, y)\sim\mathcal{D}}\big((\hat y - x_\text{last})\times(y - x_\text{last})>0\big),
\end{align}
here $x_\text{last}$ denotes the volume of the last time slot, and ACC is the accuracy of whether the volume increases or decreases compared to the last time slot.

\subsection{Baselines}

We implement multiple moving average baselines and three neural network baselines in our experiments.

\subsubsection{Moving average baselines}

We adopt moving average baselines as our baselines, which are commonly used in technical analysis in finance. In statistics, suppose a series of volumes $v_1, v_2, \cdots, v_T$, three commonly adopted estimations are:
\begin{itemize}
    \item \textbf{Naive forecasting.} The naive forecasting algorithm uses $v_T$ as the prediction. In our experiments, we try to adopt the volumes of \textbf{yesterday} or \textbf{last time slot} as the prediction.
    \item \textbf{Simple moving average (SMA).} SMA uses the average value $\bar v=\frac{1}{T}\sum\limits_{i=1}^T v_i$ as the prediction. In our experiments, we try to adopt the volumes of \textbf{20-day average} or \textbf{12-slot average} as the prediction.
    \item \textbf{Exponential moving average (EMA).} EMA places a greater weight on the nearest values. It sets $x_1=v_1$, $x_t=(1-\rho)x_{t-1}+\rho v_t$ and uses $x_T$ as the prediction. In our experiments, we adopt $\rho=0.04$ and try to use the \textbf{20-day EMA} or \textbf{12-slot EMA} as the prediction.
\end{itemize}

We also try a method that considers both the 20-day average and 12-slot average volumes, \textbf{20-day and 12-slot average}. It uses the mean of the 20-day average and the 12-slot average as the prediction.

\subsubsection{Linear}
The linear model is formulated as $f(\vect{x}, \bm\theta)=\bm\theta^\text{T}\vect{x}+b$, where $\vect{x}\in\mathbb{R}^{160}$ is the flattened vector of the concatenated input of both 12-slot and 20-day history.

\subsubsection{LSTM}
The long-short term memory (LSTM)~\citep{LSTM} networks can capture features and long-term dependency in the entire sequences of data. Following \cite{volume-lstm}, we implement two one-layer LSTM models with the global attention mechanism~\citep{attention} to generate the representation vectors of previous time slots and 20-day history. Before feeding input into LSTM, we first adopt two linear layers to project the inputs into two higher dimensional spaces, respectively. Then, we concatenate the representation vectors and feed them into a linear layer to regress the log volume. The input size, hidden size, and output size of LSTM models are $200$.

\subsubsection{Transformer}
We also implement a six-layer Transformer~\citep{transformer} encoder model as a baseline. We concatenate the 12-slot and 20-day history input and add a special token \text{[CLS]} before it to get a series $\vect{X}\in\mathbb{R}^{33\times 5}$. The data of \text{[CLS]}, $\vect{X}[0,:]$, is treated as a trainable parameter of Transformer and no adversarial attacks are conducted on this dimension. Before feeding input into Transformer, we first adopt one linear layer to project the inputs into a higher dimensional space. Then, we feed the output representation of \text{[CLS]} of the last layer into a linear layer to regress the volume. The input size, hidden size, and output size are $200$, where the hidden states are split into $8$ heads. 

%%%%%%%%%%%%%%%%%%%%%%%%%%%%%%%%%%%%%%%%%%%%%%%%%%%%%%%%%%%%%%%%%%%%%%%%%%%%%%%%%%%%%%%%
\begin{table*}[!t]
\caption{Experimental results of moving average baselines, neural network baselines, and adversarial training algorithms. AT denotes adversarial training, and ASAT denotes adaptively scaled adversarial training.}
\scriptsize
\label{tab:results}
\setlength{\tabcolsep}{2pt}
\centering
\begin{tabular}{@{}lcccccccc@{}}
\toprule
 \bf Dataset & \multicolumn{4}{c}{\bf Hourly} & \multicolumn{4}{c}{\bf Five-minute}\\ 
 \midrule
 \bf Model & \bf MSE & \bf RMSE  & \bf MAE  & \bf ACC & \bf MSE & \bf RMSE  & \bf MAE  & \bf ACC\\ 
 \midrule
  yesterday & 0.286 & 0.535 & 0.406 & 0.689 & 1.205 & 1.098 & 0.797 & 0.666 \\
 20-day average & \textbf{0.249} & \textbf{0.499} & \textbf{0.385} & \textbf{0.710} & \textbf{0.700} & \textbf{0.837} & \textbf{0.608} & \textbf{0.709} \\
 20-day EMA & 0.288 & 0.536 & 0.413 & 0.696 & 0.808 & 0.899 & 0.659 & 0.692 \\
 last time slot& 0.324 & 0.569 & 0.439 & 0.500 & 1.125 & 1.060 & 0.745 & 0.500 \\
 12-slot average &  0.524 & 0.724 & 0.413 & 0.696 & 0.988 & 0.994 & 0.714 & 0.629 \\
 12-slot EMA & 0.331 & 0.575 & 0.443 & 0.642 & 1.314 & 1.146 & 0.844 & 0.602 \\
 20-day and 12-slot average & 0.271 & 0.521 & 0.393 & 0.662 & 0.741 & 0.861 & 0.609 & 0.695 \\
 \midrule
 Linear & 0.227$\pm$0.019 & 0.477$\pm$0.019 & 0.370$\pm$0.023 & 0.708$\pm$0.012 & 0.808$\pm$0.072 & 0.892$\pm$0.041 & 0.700$\pm$0.029 & 0.660$\pm$0.037 \\
 +Traditional AT  & 0.207$\pm$0.003 & 0.454$\pm$0.003 & 0.348$\pm$0.007 & \textbf{0.723$\pm$0.004} & 0.677$\pm$0.055 & 0.824$\pm$0.034 & 0.625$\pm$0.042 & 0.691$\pm$0.025   \\
 +Time-dependent AT (Linear) &  \textbf{0.206$\pm$0.003} & \textbf{0.454$\pm$0.003} & \textbf{0.347$\pm$0.007} & 0.722$\pm$0.005 & 0.678$\pm$0.055 & 0.822$\pm$0.034 & 0.624$\pm$0.042 & 0.691$\pm$0.025   \\
 +Time-dependent AT (Exp) &  0.208$\pm$0.003 & 0.456$\pm$0.003 & 0.349$\pm$0.007 & 0.720$\pm$0.005 & 0.676$\pm$0.055 & 0.823$\pm$0.034 & 0.624$\pm$0.042 & 0.692$\pm$0.025 \\
 +ASAT &  \textbf{0.206$\pm$0.004} & \textbf{0.454$\pm$0.004} & \textbf{0.347$\pm$0.006} & 0.720$\pm$0.007 & \textbf{0.671$\pm$0.050} & \textbf{0.819$\pm$0.031} & \textbf{0.620$\pm$0.039} & \textbf{0.694$\pm$0.023}  \\
 \midrule
 LSTM & 0.223$\pm$0.005 & 0.472$\pm$0.006 & 0.361$\pm$0.007 & 0.708$\pm$0.004 & 0.801$\pm$0.046 & 0.895$\pm$0.027 & 0.690$\pm$0.013 & 0.671$\pm$0.006 \\
 +Traditional AT  &  0.221$\pm$0.002 & 0.470$\pm$0.002 & 0.360$\pm$0.003 & 0.711$\pm$0.003 & 0.820$\pm$0.054 & 0.905$\pm$0.030 & 0.692$\pm$0.030 & 0.673$\pm$0.012  \\
 +Time-dependent AT (Linear) & 0.222$\pm$0.002 & 0.471$\pm$0.002 & 0.360$\pm$0.004 & 0.710$\pm$0.002 & 0.805$\pm$0.052 & 0.897$\pm$0.029 & 0.688$\pm$0.032 & 0.673$\pm$0.014 \\
 +Time-dependent AT (Exp)  & 0.221$\pm$0.004 & 0.470$\pm$0.004 & 0.360$\pm$0.004 & 0.711$\pm$0.004 & 0.798$\pm$0.050 & 0.893$\pm$0.028 & 0.683$\pm$0.030 & \textbf{0.675$\pm$0.013} \\
 +ASAT &  \textbf{0.220$\pm$0.002} & \textbf{0.469$\pm$0.003} & \textbf{0.359$\pm$0.005} & \textbf{0.712$\pm$0.003} & \textbf{0.792$\pm$0.058} & \textbf{0.890$\pm$0.032} & \textbf{0.681$\pm$0.033} & \textbf{0.675$\pm$0.013} \\
 \midrule
 Transformer & 0.219$\pm$0.012 & 0.470$\pm$0.013 & 0.357$\pm$0.012 & 0.711$\pm$0.012 & 0.665$\pm$0.044 & 0.815$\pm$0.027 & 0.610$\pm$0.032 & 0.695$\pm$0.019 \\
 +Traditional AT  & 0.219$\pm$0.014 & 0.467$\pm$0.015 & 0.359$\pm$0.013 & 0.711$\pm$0.013 & 0.658$\pm$0.033 & 0.811$\pm$0.020 & 0.603$\pm$0.024 & 0.702$\pm$0.012 \\
 +Time-dependent AT (Linear) & 0.208$\pm$0.005 & 0.456$\pm$0.005 & 0.350$\pm$0.009 & 0.718$\pm$0.008 & 0.666$\pm$0.071 & 0.815$\pm$0.042 & 0.611$\pm$0.051 & 0.693$\pm$0.028\\
 +Time-dependent AT (Exp) &  0.207$\pm$0.005 & 0.455$\pm$0.006 & 0.348$\pm$0.009 & 0.721$\pm$0.007 & 0.658$\pm$0.060 & 0.810$\pm$0.036 & 0.606$\pm$0.046 & 0.696$\pm$0.026\\
 +ASAT &  \textbf{0.201$\pm$0.002} & \textbf{0.449$\pm$0.002} & \textbf{0.342$\pm$0.004} &\textbf{0.726$\pm$0.003} & \textbf{0.626$\pm$0.023} & \textbf{0.791$\pm$0.015} & \textbf{0.580$\pm$0.017} & \textbf{0.712$\pm$0.010} \\
\bottomrule
\end{tabular}
\end{table*}
%%%%%%%%%%%%%%%%%%%%%%%%%%%%%%%%%%%%%%%%%%%%%%%%%%%%%%%%%%%%%%%%%%%%%%%%%%%%%%%%%%%%%%%%

\subsection{Settings and Choices of Hyperparameters}

We train every model for $5$ epochs and report the test performance on the checkpoint with the lowest valid loss. We adopt the Adam optimizer and initialize the learning rate with $0.001$. The batch size is $32$. We repeat every experiment with $5$ runs. Experiments are conducted on NVIDIA TITAN RTX GPUs.

Following \cite{freeLB}, we try $K\in \{1, 2, 3\}$ and $\tau=1.5*\epsilon/K$, where the hyperparameter $K$ is the step number in both PGD attacks or the attacks in the multi-step risk averaging algorithm, $\tau$ is the step size and $\epsilon$ controls the strength of adversarial training in the constraint set $S$. We try $L_2$ and $L_{+\infty}$ in $S$, and explore the hyperparameters $\epsilon$ with the grid search method. The process of hyperparameter search shows that too large or small $\epsilon$ cannot improve the model accuracy well. Since too small $\epsilon$ is not enough for improving generalization ability and too large $\epsilon$ may harm the learning process, an appropriate $\epsilon$ needs to be selected. For time-dependently scaled adversarial training, an appropriate $\gamma$ needs to be selected with the grid search method, too. Detailed experimental results in the choice of hyperparameters are reported in Appendix.

\subsection{Experimental Results}

After finding the best configurations of hyperparameters, the experimental results are shown in Table~\ref{tab:results}. It can be concluded that the 20-day average baseline performs best among multiple moving average baselines. We only report results of the 20-day average baseline in the following analysis. Moreover, three neural network baselines perform better than the 20-day average baseline.

For adversarial training algorithms, the traditional adversarial training, the proposed time-dependently scaled adversarial training, and the adaptively scaled adversarial training algorithms can all improve the performance of baselines. Among them, adversarial training with time-dependent or adaptive scales outperforms traditional adversarial training, and the adaptively scaled adversarial training algorithm performs best among multiple adversarial training algorithms. We also conduct hypothesis tests to verify that the adaptively scaled adversarial training algorithm outperforms baselines statistically significantly $(p<0.05)$ on the linear and the Transformer model. In the LSTM model, the baseline is weak and the improvements of all adversarial training methods are marginal. Details of hypothesis tests are reported in Appendix.

%%%%%%%%%%%%%%%%%%%%%%%%%%%%%%%%%%%%%%%%%%%%%%%%%%%%%%%%%%%%%%%%%%%%%%%%%%%%%%%%%%%%%%%%
\begin{table}[!t]
\caption{Results of ablation study. ASAT (PGD) denoted ASAT with the PGD target.}
\scriptsize
\label{tab:ablation}
\setlength{\tabcolsep}{1pt}
\centering
\begin{tabular}{@{}lcccccccc@{}}
\toprule
 \bf Dataset & \multicolumn{4}{c}{\bf Hourly} & \multicolumn{4}{c}{\bf Five-minute}\\ 
 \midrule
 \bf Model & \bf MSE & \bf RMSE  & \bf MAE  & \bf ACC & \bf MSE & \bf RMSE  & \bf MAE  & \bf ACC\\ 
 \midrule
 20-day average & 0.249 & 0.499 & 0.385 & 0.710 & 0.700 & 0.837 & 0.608 & 0.709 \\
 \midrule
 Linear & 0.227 & 0.477& 0.370 & 0.708 & 0.808 & 0.892 & 0.700 & 0.660 \\
 +Traditional AT & 0.207 & \textbf{0.454} & 0.348 & \textbf{0.723} & 0.677 & 0.824 & 0.625 & 0.691 \\
 +ASAT & {0.206} & \textbf{0.454} & \textbf{0.347} & 0.720 & \textbf{0.671} & \textbf{0.819} & \textbf{0.620} & \textbf{0.694}  \\
 +$\alpha_i=\frac{|x_i|}{\max|x_i|}$ & \textbf{0.205} & \textbf{0.454} & 0.348 & 0.720 &0.691 & 0.832 &0.637 &0.682 \\
 +$\alpha_i=\frac{|g_i|^{-1}}{\max|g_i|^{-1}}$ & 0.207 & 0.455 & 0.348 & 0.718& 0.679 & 0.824 & 0.625 & 0.690 \\
 +ASAT (PGD) & 0.207 &0.455 &0.348& 0.719& 0.675 &0.821&0.622&0.692 \\
 \midrule
 LSTM & 0.223 & 0.472 & 0.361 & 0.708 & 0.801 & 0.895 & 0.690 & 0.671 \\
 +Traditional AT & 0.221 & 0.470 & 0.360 & 0.711 & 0.820& 0.905 & 0.692 & 0.673 \\
 +ASAT & \textbf{0.220} & \textbf{0.469} & \textbf{0.359} & \textbf{0.712} & \textbf{0.792} & \textbf{0.890} & \textbf{0.681} & \textbf{0.675} \\
 +$\alpha_i=\frac{|x_i|}{\max|x_i|}$ & 0.223 & 0.472 & 0.361 & 0.710 & 0.787& 0.887 & 0.692 & 0.668\\
 +$\alpha_i=\frac{|g_i|^{-1}}{\max|g_i|^{-1}}$ &0.221&0.470&0.360&0.710&0.800&0.894&0.685&0.674\\
 +ASAT (PGD) & \textbf{0.220} & \textbf{0.469}&0.360&0.711&0.817&0.903&0.692&0.673\\
 \midrule
 Transformer & 0.219 & 0.470 & 0.357 & 0.711 & 0.665 & 0.815 & 0.610 & 0.695 \\
 +Traditional AT  & 0.219& 0.467 & 0.359 & 0.711& 0.658 & 0.811 & 0.603 & 0.702 \\
 +ASAT &  \textbf{0.201} & \textbf{0.449} & \textbf{0.342} &\textbf{0.726} & \textbf{0.626} & \textbf{0.791} & \textbf{0.580} & \textbf{0.712} \\
 +$\alpha_i=\frac{|x_i|}{\max|x_i|}$ & 0.203& 0.450& 0.343&	0.7251 &0.632&0.795&0.585&0.711\\
 +$\alpha_i=\frac{|g_i|^{-1}}{\max|g_i|^{-1}}$ & 0.208&	0.456 &0.348&0.720&0.662&0.813&0.609&0.697\\
 +ASAT (PGD) & 0.206&0.453&	0.350&	0.720&0.637&0.798&0.590&0.708\\
\bottomrule
\end{tabular}
\end{table}
%%%%%%%%%%%%%%%%%%%%%%%%%%%%%%%%%%%%%%%%%%%%%%%%%%%%%%%%%%%%%%%%%%%%%%%%%%%%%%%%%%%%%%%%

\section{Analysis}
In this section, we first conduct an ablation study. Then, we compare the adversarial robustness of neural networks under adversarial training. We also adopt ASAT to enhance existing methods. Last, we probe the sensitivities of different input dimensions and analyze the decision bases of neural networks, both on the dataset level and single instance level. 

\subsection{Ablation Study}

We conduct an ablation study to investigate the influence of different mechanisms in our proposed ASAT. In ASAT, as analyzed in Sec.~\ref{sec:asat}, scales $\alpha_i$ are determined by both $|x_i|$ and $|g_i|$ in Eq.(\ref{eq:scales}). We try both $\alpha_i={|x_i|}/{\max|x_i|}$ and  $\alpha_i={|g_i|^{-1}}/{\max|g_i|^{-1}}$ to verify the effectiveness of the proposed adaptive scales. The proposed ASAT algorithm minimizes the multi-step risk averaging target in Eq.(\ref{eq:FreeLB}). We try to adopt the traditional adversarial learning target in Eq.(\ref{eq:PGD_target}) to verify the effectiveness of the proposed multi-step risk averaging. Experimental results are shown in Table~\ref{tab:ablation}.

%%%%%%%%%%%%%%%%%%%%%%%%%%%%%%%%%%%%%%%%%%%%%%%%%%%%%%%%%%%%%%%%%%%%%%%%%%
\begin{table}[!t]
\caption{Results of models under adversarial attacks. Risk $L_p$ denotes the adversarial risk under the $L_p$ constraint $\mathcal{R}_\text{adv}(\{\bm\delta: \|\bm\delta\|_p\le \epsilon \})$, Mean $\mathcal{R}^{(i)}$ denotes the mean dimension-wise adversarial risk $\frac{1}{k}\sum\limits_{i=1}^{k}\mathcal{R}_\text{adv}^{(i)}(\epsilon)$. We adopt $\epsilon=10^{-3}$ in experiments.}
\scriptsize
\label{tab:robustness}
\setlength{\tabcolsep}{3pt}
\centering
\begin{tabular}{@{}lccccc@{}}
\toprule
 \bf Dataset & \multicolumn{4}{c}{\bf Five-minute} \\
 \midrule
 \bf Model & \bf Clean MSE & \bf Risk $L_2$  &  \bf Risk $L_{+\infty}$ & \bf Mean $\mathcal{R}^{(i)}$ \\
  \midrule
  Linear & 0.808 & 2.9$\times 10^{-5}$ &	2.4$\times 10^{-4}$&1.5$\times 10^{-3}$	\\ 
 +Traditional AT & 0.677 &2.5$\times 10^{-5}$ &	2.0$\times 10^{-4}$ & 1.3$\times 10^{-3}$ \\ 
 +ASAT & 0.671 & 2.6$\times 10^{-5}$ &	1.9$\times 10^{-4}$ & 1.2$\times 10^{-3}$ \\ 
 \midrule
  LSTM & 0.801 &  2.0$\times 10^{-5}$&	5.6$\times 10^{-5}$&3.4$\times 10^{-4}$\\ 
 +Traditional AT & 0.820 & 1.8$\times 10^{-5}$&	4.8$\times 10^{-5}$&2.9$\times 10^{-4}$ \\ 
 +ASAT & 0.792 &  1.9$\times 10^{-5}$&	5.0$\times 10^{-5}$&3.2$\times 10^{-4}$\\ 
 \midrule
 Transformer & 0.665 & 1.6$\times 10^{-5}$&	6.8$\times 10^{-5}$&4.3$\times 10^{-4}$\\
 +Traditional AT & 0.658 & 1.4$\times 10^{-5}$&5.3$\times 10^{-5}$&3.3$\times 10^{-4}$\\ 
 +ASAT & 0.626 & 1.4$\times 10^{-5}$&4.6$\times 10^{-5}$&2.9$\times 10^{-4}$\\
 \bottomrule
\end{tabular}
\end{table}
%%%%%%%%%%%%%%%%%%%%%%%%%%%%%%%%%%%%%%%%%%%%%%%%%%%%%%%%%%%%%%%%%%%%%%%%%%

Experimental results show that ASAT with the proposed adaptive scales outperforms ASAT with $\alpha_i={|x_i|}/{\max|x_i|}$ and ASAT with $\alpha_i={|g_i|^{-1}}/{\max|g_i|^{-1}}$ variants slightly, which demonstrates the necessity of both $|g_i|$ and $|x_i|$ in adaptive scales. ASAT with the multi-step risk averaging target outperforms ASAT with the PGD target, which demonstrates the effectiveness of the proposed multi-step risk averaging. Moreover, ASAT with the PGD target still outperforms traditional AT in most cases, which demonstrates that our proposed adaptive scales can improve the generalization ability of neural networks without the proposed multi-step risk averaging.

%%%%%%%%%%%%%%%%%%%%%%%%%%%%%%%%%%%%%%%%%%%%%%%%%%%%%%%%%%%%%%%%%%%%%%%%%%
\begin{figure}[!t]
\centering
\subcaptionbox{MSE losses ($L_2$-norm).
\label{fig:robustness_transformer_a}}{\includegraphics[height=2.5 in,width=0.9\linewidth]{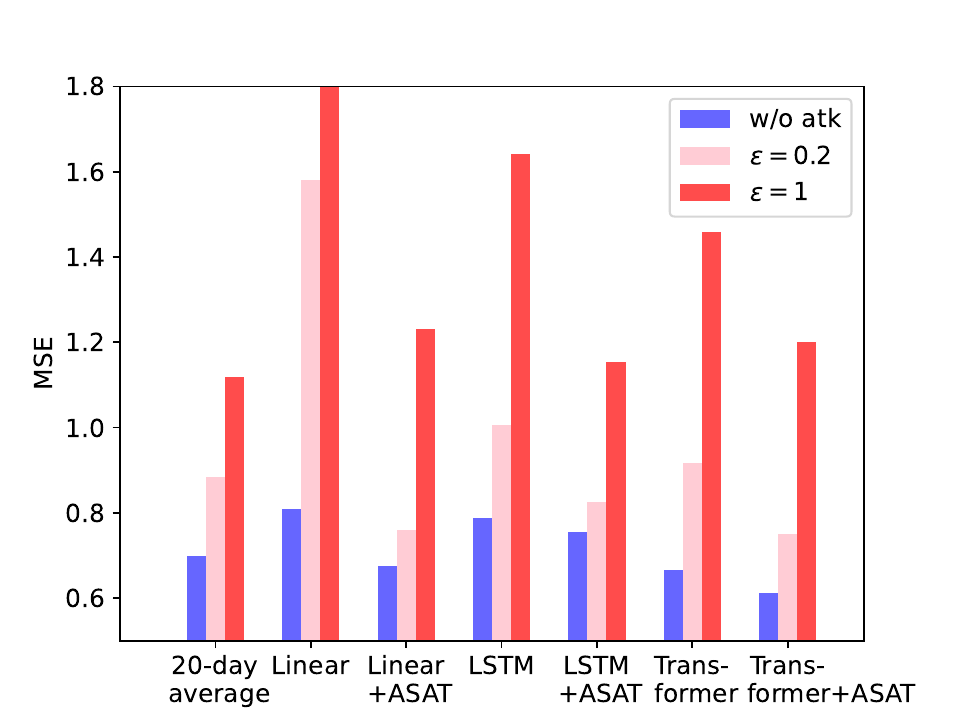}}
\subcaptionbox{MSE losses ($L_{+\infty}$-norm).
\label{fig:robustness_transformer_b}}{\includegraphics[height=2.5 in,width=0.9\linewidth]{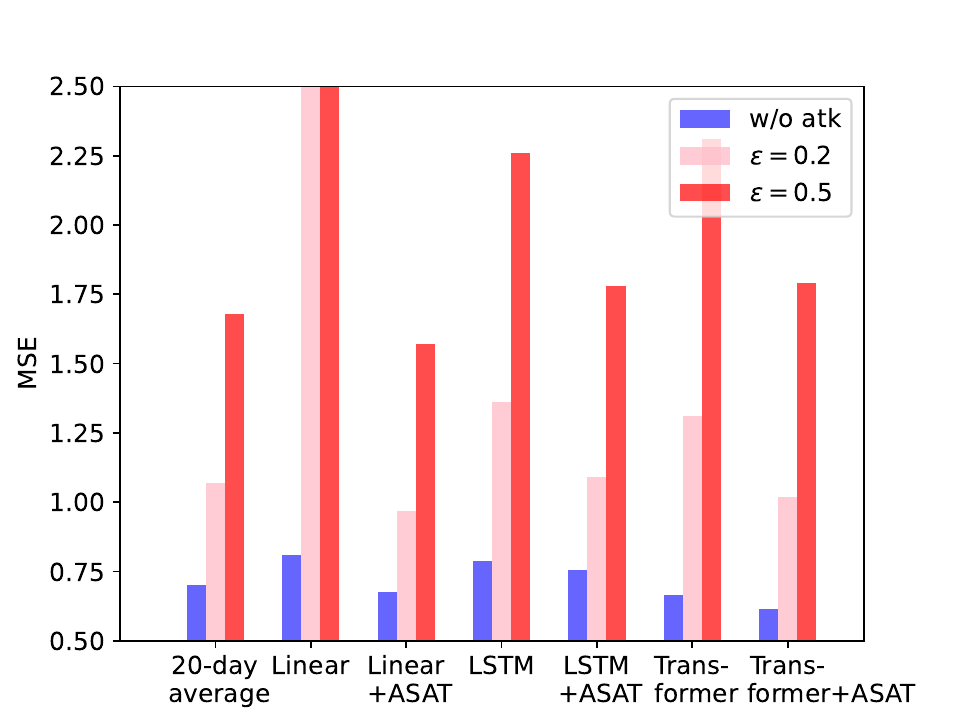}}
\caption{MSE losses of models under multiple adversarial attacks with $L_2$-norm and $L_{+\infty}$-norm constraints. MSE losses larger than 1.8 or 2.5 are not completely shown.
\label{fig:robustness_five}}
\end{figure}
%%%%%%%%%%%%%%%%%%%%%%%%%%%%%%%%%%%%%%%%%%%%%%%%%%%%%%%%%%%%%%%%%%%%%%%%%

\subsection{Adversarial Robustness with ASAT}

We compare the adversarial robustness of baseline models and models with adversarial training under multiple adversarial attacks on the five-minute dataset. 

We first adopt multiple adversarial sensitivity indicators, including adversarial sensitivities under $L_2$ or $L_{+\infty}$ attacks and mean dimension-wise adversarial sensitivity. Results are shown in Table~\ref{tab:robustness}. Experimental results show that ASAT can improve both the adversarial robustness and generalization ability of neural networks compared to the baselines. Compared to traditional adversarial training, ASAT can achieve better generalization ability and similar adversarial robustness.

We also visualize multiple adversarial attacks with several larger attacking scales. Experimental results are shown in Fig.~\ref{fig:robustness_five}. Similarly, we can conclude that ASAT can improve both the generalization ability and the adversarial robustness compared to baselines. Moreover, baseline linear, LSTM and Transformer models tend to be more sensitive to adversarial attacks than the 20-day average, which indicates that the traditional 20-day average baseline is more robust than neural networks. However, ASAT can improve the robustness of neural networks and achieve similar robustness to the 20-day average.

% modify begin
%%%%%%%%%%%%%%%%%%%%%%%%%%%%%%%%%%%%%%%%%%%%%%%%%%%%%%%%%%%%%%%%%%%%%%%%%%
\begin{table}[!t]
%\vskip 0.1 in
\caption{Results of ASAT on existing high-performance methods, including DeepAR and M-LSTM.~\label{tab:stoa}}
\scriptsize
\setlength{\tabcolsep}{1pt}
\centering
\begin{tabular}{@{}lcccc@{}}
\toprule
 \bf Dataset & \multicolumn{4}{c}{\bf Hourly} \\
 \midrule
 \bf Model & \bf MSE & \bf RMSE  & \bf MAE & \bf ACC \\
 \midrule
  20-day average & {0.249} & {0.499} & {0.385} & {0.710} \\
  \midrule
  Transformer & 0.219$\pm$0.012 & 0.470$\pm$0.013 & 0.357$\pm$0.012 & 0.711$\pm$0.012 \\
   +ASAT & \textbf{0.201$\pm$0.002} & \textbf{0.449$\pm$0.002} & \textbf{0.342$\pm$0.004} &\textbf{0.726$\pm$0.003}  \\
  \midrule
 LSTM & 0.223$\pm$0.005 & 0.472$\pm$0.006 & 0.361$\pm$0.007 & 0.708$\pm$0.004  \\
 +ASAT & {0.220$\pm$0.002} & {0.469$\pm$0.003} & {0.359$\pm$0.005} & {0.712$\pm$0.003} \\
  \midrule
  M-LSTM & 0.235$\pm$0.013 & 0.485$\pm$0.013 & 0.373$\pm$0.012&0.687$\pm$0.013\\
   +ASAT & 0.225$\pm$0.010&0.474$\pm$0.011&0.363$\pm$0.007&0.698$\pm$0.008\\
  \midrule
  DeepAR & 0.207$\pm$0.006 & 0.455$\pm$0.007 & 0.348$\pm$0.011 & 0.719$\pm$0.010 \\
   +ASAT & \textbf{0.201$\pm$0.003}&\textbf{0.449$\pm$0.004}&0.343$\pm$0.003&0.723$\pm$0.004\\
 \bottomrule
 \end{tabular}
\end{table}
%%%%%%%%%%%%%%%%%%%%%%%%%%%%%%%%%%%%%%%%%%%%%%%%%%%%%%%%%%%%%%%%%%%%%%%%%%
% modify end

%%%%%%%%%%%%%%%%%%%%%%%%%%%%%%%%%%%%%%%%%%%%%%%%%%%%%%%%%%%%%%%%%%%%%%%%%%%%%%%%%%%%%%%%%%%%%%%%%%%%%%%%%%%%%%%%%%%%%%%%%%%%%%%
\begin{figure*}[!t]
\centering
\subcaptionbox{$\theta_i$ of baseline linear model-1.
\label{fig:mlp_a}}{\includegraphics[height=2.3 in,width=0.32\linewidth]{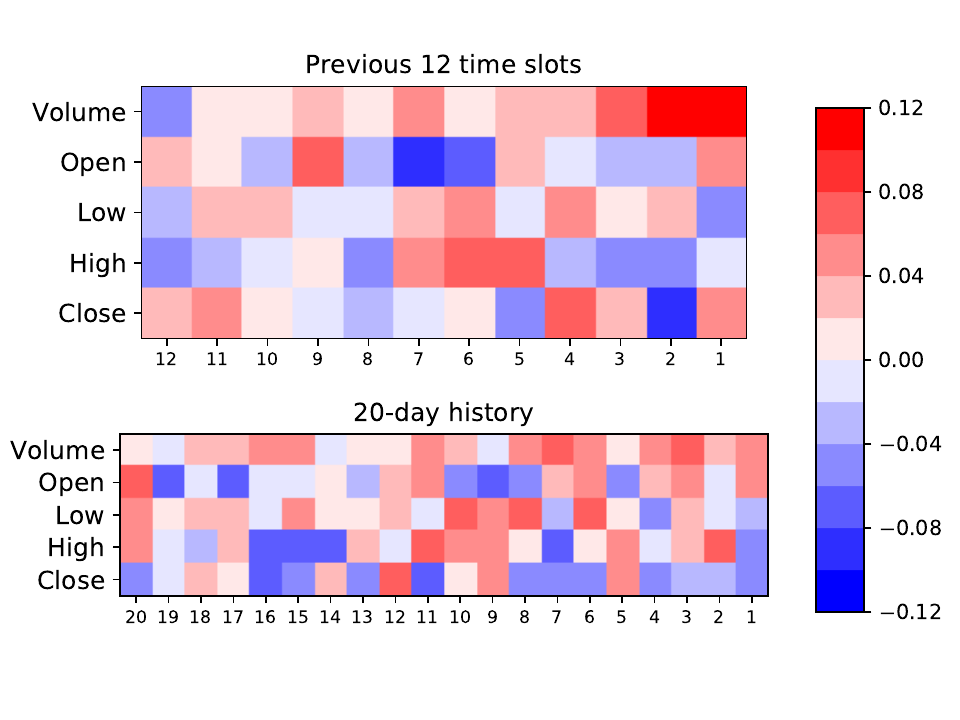}}
\subcaptionbox{$\mathcal{R}^{(i)}_\text{adv}$ of baseline linear model-1.\label{fig:mlp_b}}{\includegraphics[height=2.3 in,width=0.32\linewidth]{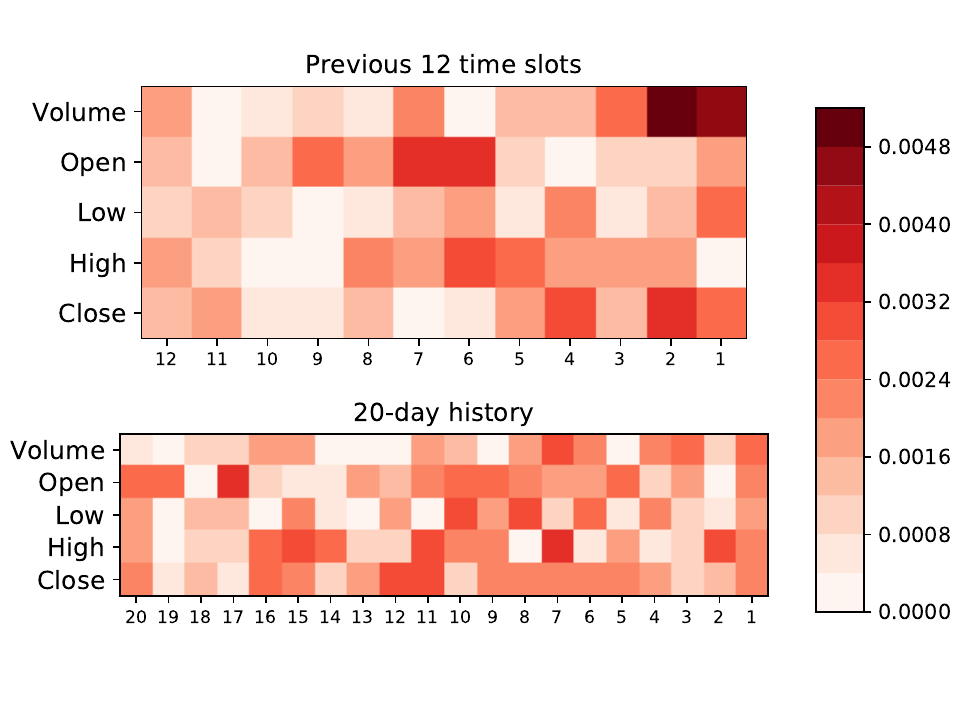}}
\subcaptionbox{$\mathcal{R}^{(i)}_\text{adv}$ of ASAT linear model-1.\label{fig:mlp_c}}{\includegraphics[height=2.3 in,width=0.32\linewidth]{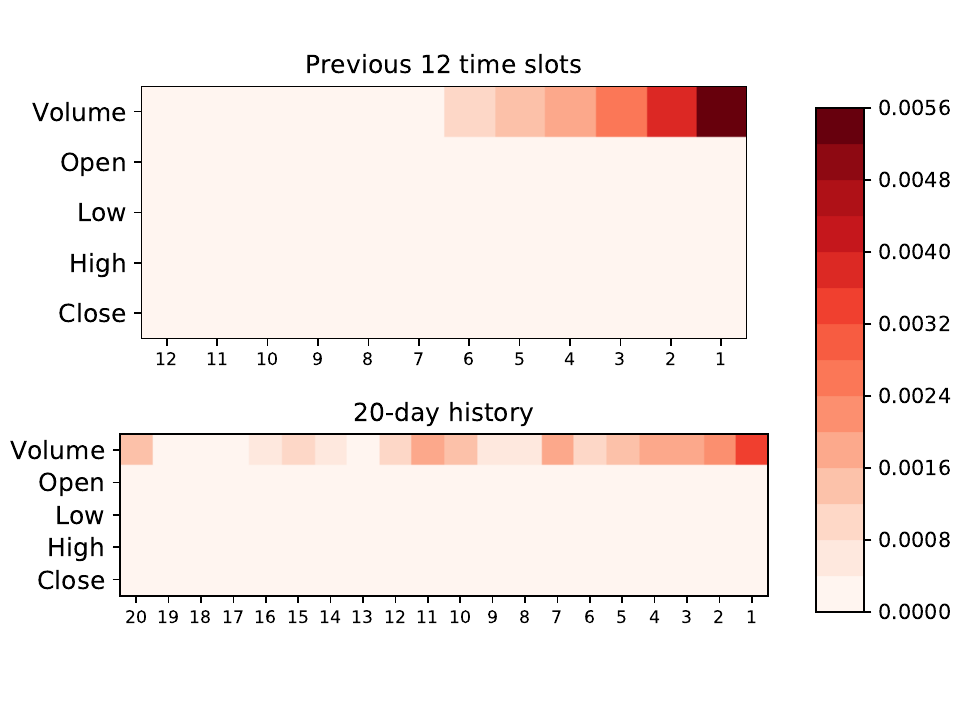}}
\caption{
Weights $\theta_i$ and dimension-wise sensitivities $\mathcal{R}^{(i)}_\text{adv}$ of linear models, both baseline and model with ASAT, on the dataset. The sensitivity of the linear model is proportional to the absolute value of the weight or the factor loading in a linear model.
}
\label{fig:mlp}
\end{figure*}
%%%%%%%%%%%%%%%%%%%%%%%%%%%%%%%%%%%%%%%%%%%%%%%%%%%%%%%%%%%%%%%%%%%%%%%%%%%%%%%%%%%%%%%%%%%%%%%%%%%%%%%%%%%%%%%%%%%%%%%%%%%%%%%

%modify begin
\subsection{ASAT for Enhancing Existing Methods}

We also implement two typical existing high-performance time-series prediction methods on the hourly dataset: Deep Auto-Regression (\textbf{DeepAR})~\citep{DeepAR} and Multiple time scale LSTM (\textbf{M-LSTM})~\citep{MLSTM}. The base models of both DeepAR and M-LSTM methods are LSTM models.

In DeepAR~\citep{DeepAR}, we choose the Gaussian distribution as the prior distribution and adopt two linear layers to predict $\mu$ and $\sigma$ of the predicted distribution respectively. Namely, the prediction process and the training loss are:
\begin{align}
    \mu &= \vect{U}_1\vect{h} + \vect{b}_1\\
    \sigma &= \log(\exp(\vect{U}_2\vect{h} + \vect{b}_2)+1)\\
    p(y|\mu, \sigma) &= \frac{1}{\sqrt{2\pi \sigma^2}}\exp(-\frac{(y-\mu)^2}{2\sigma^2})\\
    \ell(y|\mu, \sigma) &= -\log p(y|\mu, \sigma)
\end{align}
where $\vect{h}$ is the representation vector of the encoder, matrices $\vect{U}_1, \vect{U}_2$, and vectors $\vect{b}_1, \vect{b}_2$ are learnable parameters. $\mu, \sigma$ is the output of the neural networks, and $y$ is the ground truth. In the inference phase, the prediction of the volume is $\mu$. In M-LSTM, hidden states are split into four parts, and the time steps to propagate hidden states of four parts are $1$, $2$, $4$, and $8$, respectively.

The experimental results are in Table~\ref{tab:stoa}. Results show that DeepAR can enhance LSTM models in the volume prediction task, while M-LSTM cannot. ASAT can enhance these methods, which validates its effectiveness. Besides, the Transformer model enhanced with ASAT has a similar performance to the high-performance DeepAR~\citep{DeepAR} method enhanced with ASAT. The Transformer model is a high-performance time-series prediction model. 
%modify end
%%%%%%%%%%%%%%%%%%%%%%%%%%%%%%%%%%%%%%%%%%%%%%%%%%%%%%%%%%%%%%%%%%%%%%%%%%%%%%%%%%%%%%%%%%%%%%%%%%%%%%%%%%%%%%%%%%%%%%%%%%%%%%%
\begin{figure}[!t]
\centering
\subcaptionbox{$\mathcal{R}^{(i)}_\text{adv}$ of baseline  model-2. \label{fig:multiple_linear_a}}{\includegraphics[height=1.6 in,width=0.49\linewidth]{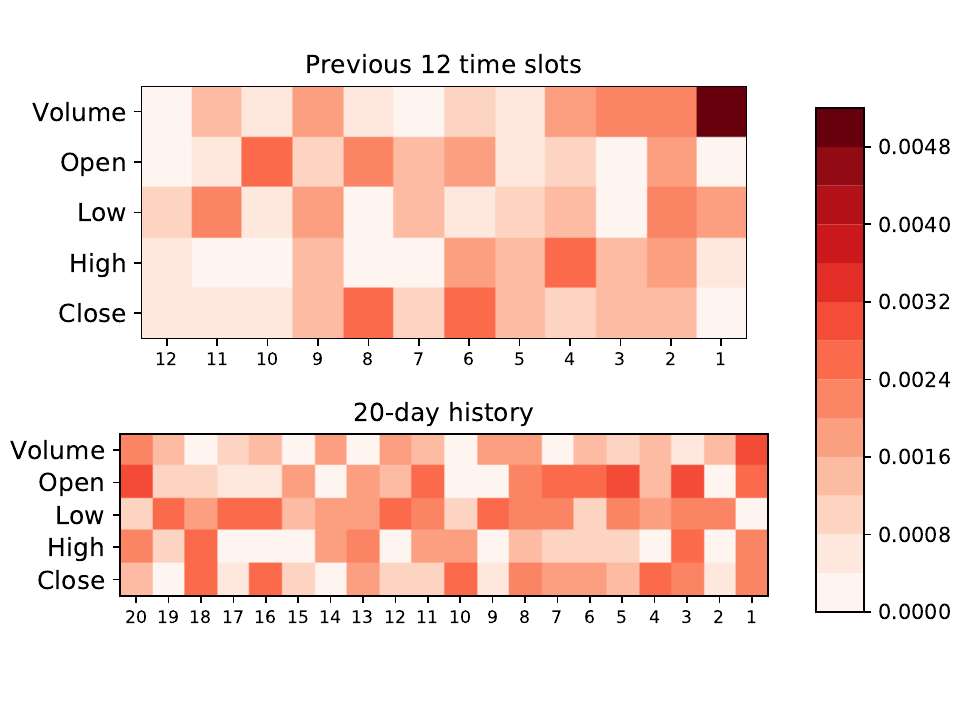}}
\hfil
\subcaptionbox{$\mathcal{R}^{(i)}_\text{adv}$ of baseline  model-3.\label{fig:multiple_linear_b}}{\includegraphics[height=1.6 in,width=0.49\linewidth]{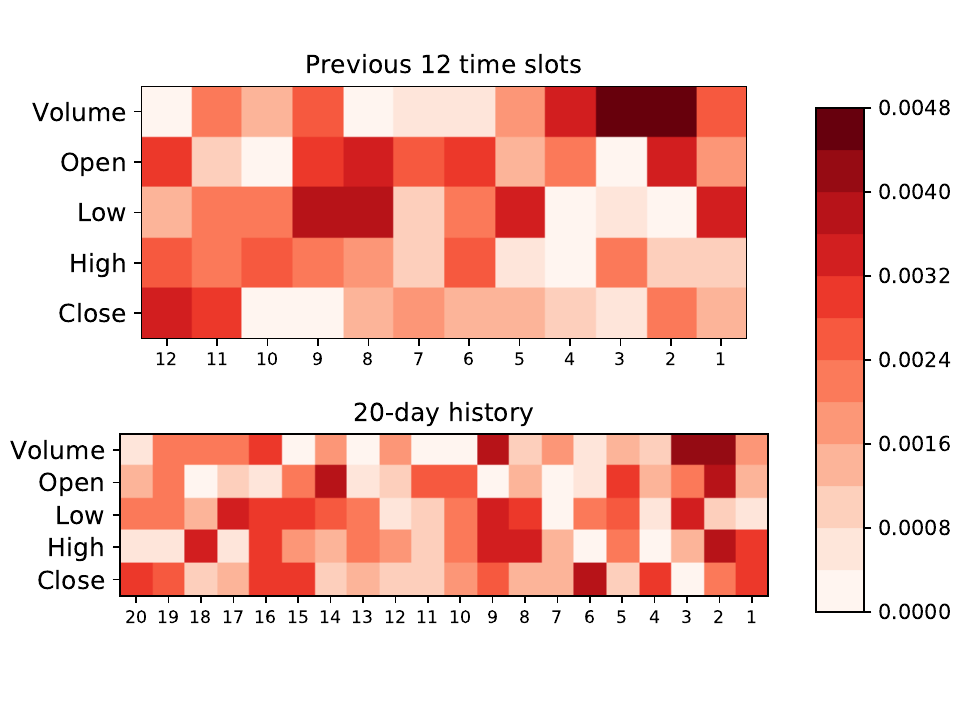}}
\hfil
\subcaptionbox{$\mathcal{R}^{(i)}_\text{adv}$ of ASAT linear model-2.\label{fig:multiple_linear_c}}{\includegraphics[height=1.6 in,width=0.49\linewidth]{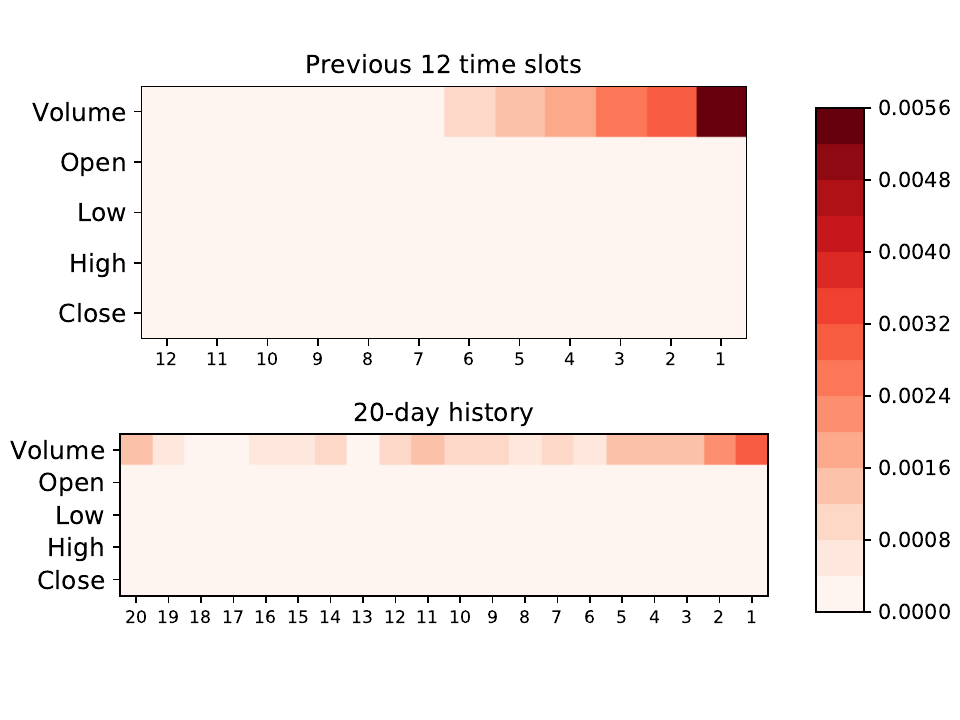}}
\hfil
\subcaptionbox{$\mathcal{R}^{(i)}_\text{adv}$ of ASAT linear model-3.\label{fig:multiple_linear_d}}{\includegraphics[height=1.6 in,width=0.49\linewidth]{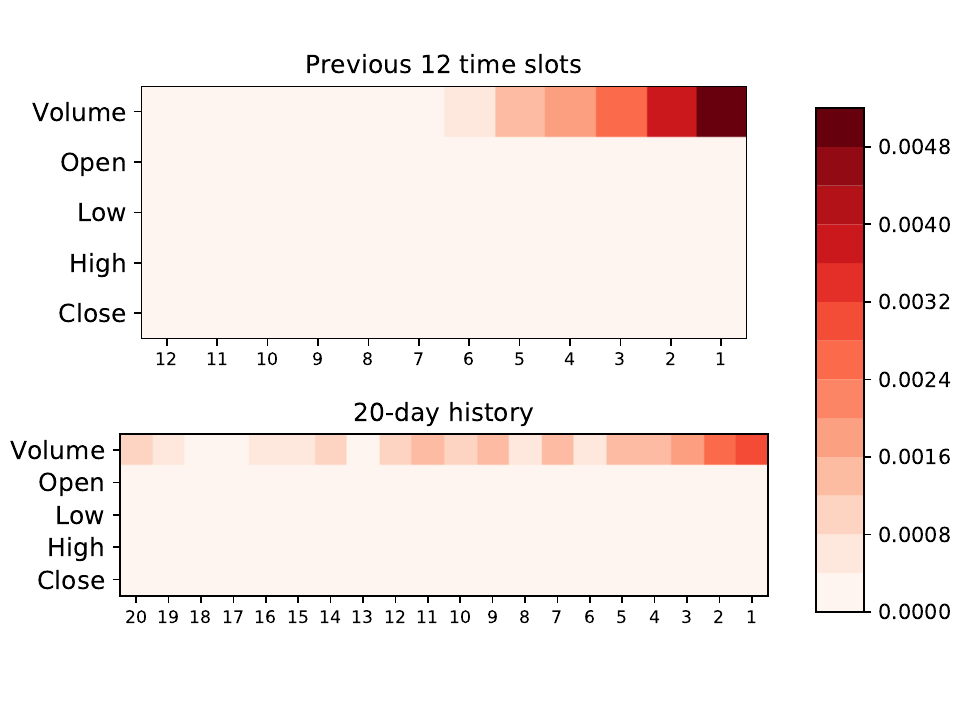}}
\caption{
Visualization of the dimension-wise sensitivities of two baseline linear models and two linear models with ASAT, on the dataset. Baseline models are sensitive to different dimensions during multiple training and tend to overfit some false clues while ASAT can alleviate the overfitting since models with ASAT always focus on similar dimensions.
}
\label{fig:multiple_linear}
\end{figure}
%%%%%%%%%%%%%%%%%%%%%%%%%%%%%%%%%%%%%%%%%%%%%%%%%%%%%%%%%%%%%%%%%%%%%%%%%%%%%%%%%%%%%%%%%%%%%%%%%%%%%%%%%%%%%%%%%%%%%%%%%%%%%%%

\subsection{Probing Dimension-wise Sensitivities}
\label{sec:overfit}

We probe the dimension-wise sensitivities of different input dimensions in linear models, trained with three random seeds, the visualization results are shown in Fig.~\ref{fig:mlp} and Fig.~\ref{fig:multiple_linear}. Here we choose the five-minute dataset, $\epsilon=1$, and the dimension-wise sensitivities are defined on the whole dataset.

As shown in Fig.~\ref{fig:mlp_a} and Fig.~\ref{fig:mlp_b}, the dimension-wise sensitivities of the linear model are proportional to the absolute values of the weights or the factor loadings in a linear model, which accords with the theoretical results in Proposition~\ref{prop:linear} that the adversarial sensitivity of a single dimension in linear models can reflect the absolute value of the weight or the factor loading. 

Fig.~\ref{fig:mlp_b}, Fig.~\ref{fig:multiple_linear_a} and Fig.~\ref{fig:multiple_linear_b} show the dimension-wise sensitivities of baseline linear models with three random seeds. The dimensions that models pay close attention to are different during multiple training, which indicates that baseline linear models tend to overfit some false clues. However, as shown in Fig.~\ref{fig:mlp_c}, Fig.~\ref{fig:multiple_linear_c} and Fig.~\ref{fig:multiple_linear_d}, models with ASAT tend to pay close attention to some recent time slots and are sensitive to them, which accords with human intuition and is reasonable since recent time slots are more important in time series regularization. It also indicates that ASAT can alleviate overfitting since models with ASAT often focus on similar dimensions during multiple training.

%%%%%%%%%%%%%%%%%%%%%%%%%%%%%%%%%%%%%%%%%%%%%%%%%%%%%%%%%%%%%%%%%%%%%%%%%%%%%%%%%%%%%%%%%%%%%%%%%%%%%%%%%%%%%%%%%%%%%%%%%%%%%%%
\begin{figure}[!t]
\centering
\subcaptionbox{$\mathcal{R}^{(i)}_\text{adv}$ of ASAT model on dataset.\label{fig:transformer_a}}{\includegraphics[height=1.6 in,width=0.49\linewidth]{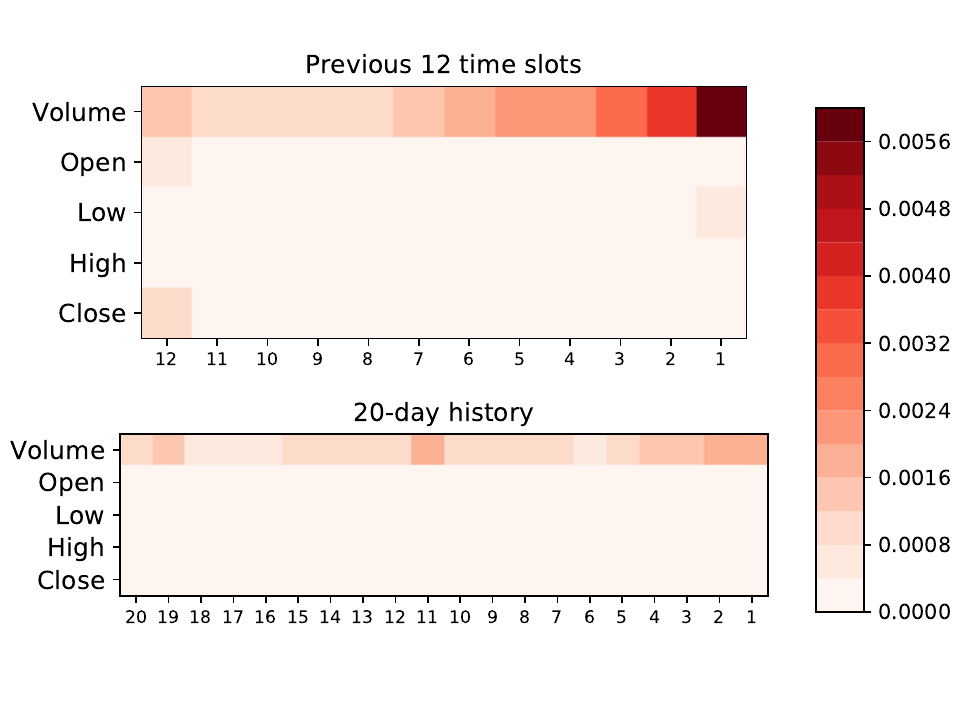}}
\hfil
\subcaptionbox{Volume of input $(\vect{x}, y)$.\label{fig:transformer_b}}{\includegraphics[height=1.6 in,width=0.49\linewidth]{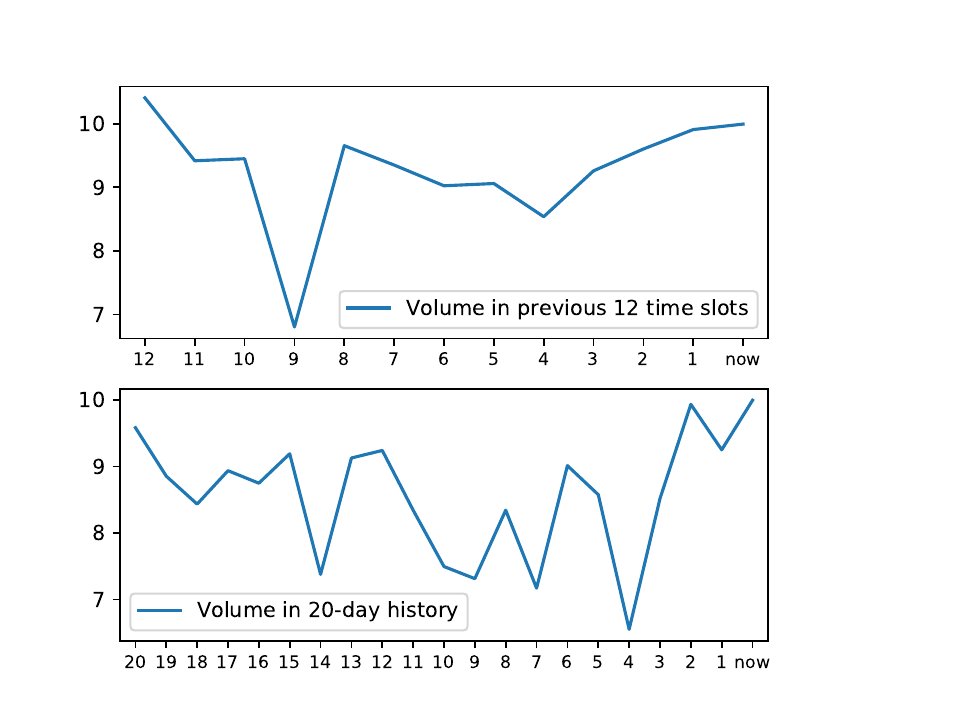}}
\hfil
\subcaptionbox{$\mathcal{R}^{(i)}_\text{adv}$ of baseline on $(\vect{x}, y)$.\label{fig:transformer_c}}{\includegraphics[height=1.6 in,width=0.49\linewidth]{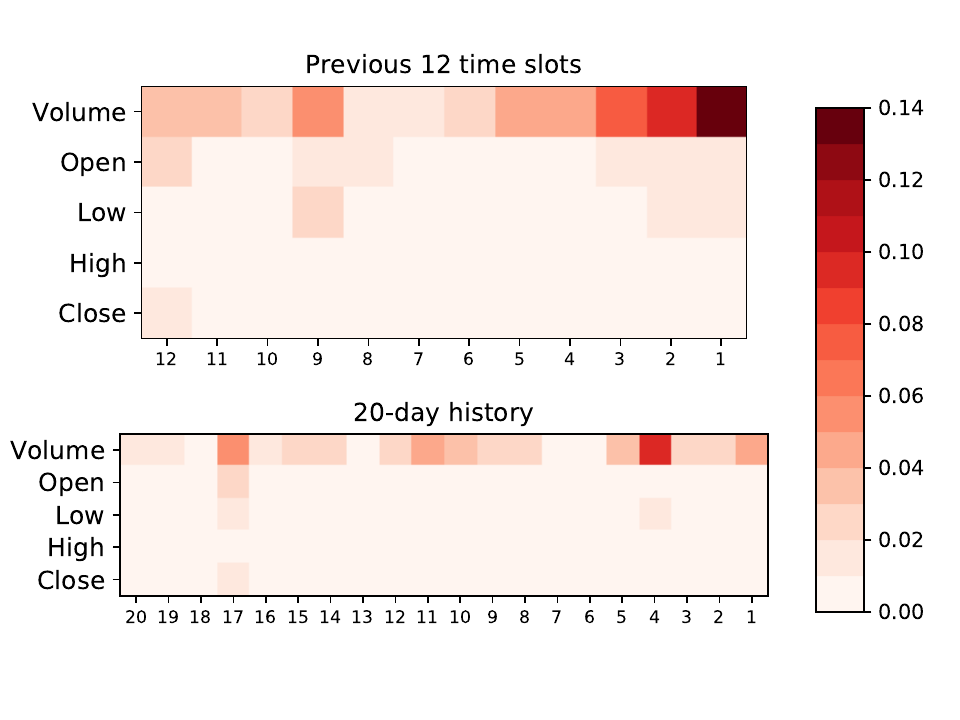}}
\hfil
\subcaptionbox{$\mathcal{R}^{(i)}_\text{adv}$ of ASAT model on $(\vect{x}, y)$.\label{fig:transformer_d}}{\includegraphics[height=1.6 in,width=0.49\linewidth]{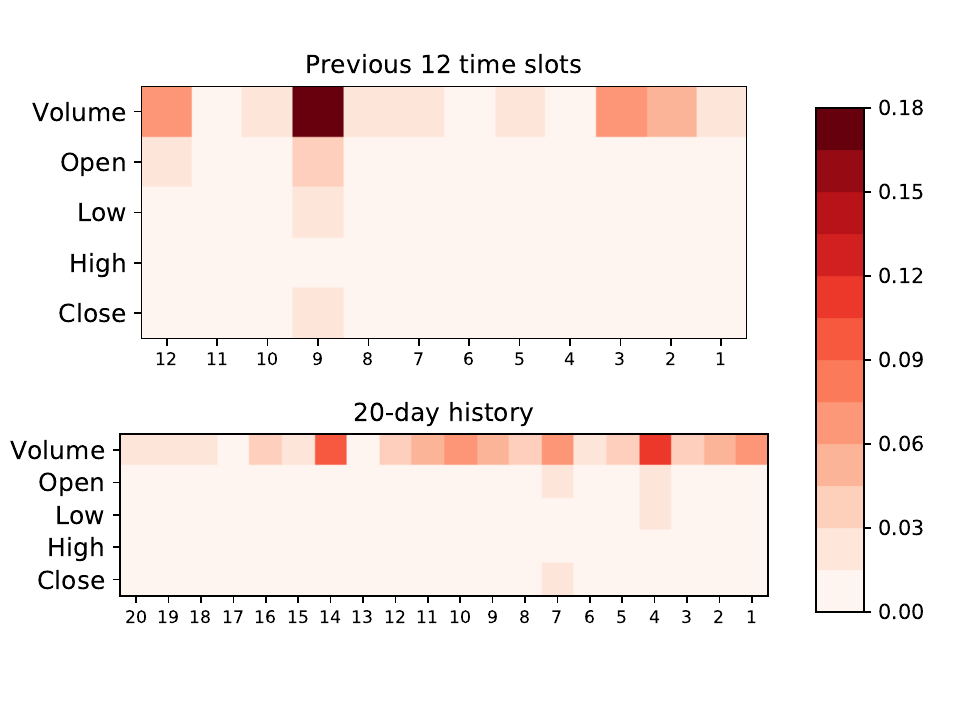}}
\caption{
Visualization of the dimension-wise sensitivities of transformer models, which reveals that models can capture abnormal fluctuations in the data input.
}
\label{fig:transformer}
\end{figure}
%%%%%%%%%%%%%%%%%%%%%%%%%%%%%%%%%%%%%%%%%%%%%%%%%%%%%%%%%%%%%%%%%%%%%%%%%%%%%%%%%%%%%%%%%%%%%%%%%%%%%%%%%%%%%%%%%%%%%%%%%%%%%%%

\subsection{Analysing Decision Basis on Individual Instance}
\label{sec:single}

In Eq.(\ref{eq:robustness}), the dimension-wise sensitivity can be defined on a dataset $\mathcal{D}$ or a single data instance $(\vect{x}, y)$. When it is defined on a single instance, it can be adopted to probe the decision basis of the model on a single instance. This probing method can provide more interpretability for neural network decisions, and provide insights on explaining the decision basis of a black box model, both on a dataset or an input.

We choose the five-minute dataset, $\epsilon=1$, and randomly choose a data instance in the dataset to plot the dimension-wise sensitivities on the dataset and a single data instance. As shown in Fig.~\ref{fig:transformer}, Fig.~\ref{fig:transformer_a} shows the dimension-wise sensitivities of a Transformer model trained with ASAT on the dataset. We can see that the model mainly concerns the volumes of nearest time slots and volumes in 20-day history. We also probe the decision basis on the data instance, whose volumes are shown in Fig.~\ref{fig:transformer_b}. The volumes of the 9-th nearest time slots, the 14-th nearest day in history, and the 4-th nearest day in history are relatively low compared to other time slots or days in history. Both the baseline Transformer (shown in Fig.~\ref{fig:transformer_c}) and the ASAT Transformer (shown in Fig.~\ref{fig:transformer_d}) are sensitive to these time slots or days. They pay close attention to these abnormal volumes and the corresponding prices. Therefore, their decisions on this data instance can capture these abnormal fluctuations. Moreover, the model with ASAT tends to be more sensitive to abnormal fluctuations, which indicates that ASAT can help the decision bases of models be more reasonable and more explainable.

\section{Related Work}

\subsection{Adversarial Training} 
 
\cite{Intriguing_properties_of_neural_networks} first propose the concept of adversarial examples that can mislead deep neural networks with small malicious perturbations, even in the physical world scenarios~\citep{Adversarial_examples_in_the_physical_world}. Besides generating adversarial examples to attack models~\citep{Intriguing_properties_of_neural_networks,Explaining_and_Harnessing_Adversarial_Examples,Adversarial_examples_in_the_physical_world,Deepfool,Towards_Evaluating_the_Robustness_of_Neural_Networks,textbugger,hotflip,one-pixel}, existing studies also concern improving the adversarial robustness and generalization ability of neural networks via adversarial training~\citep{Explaining_and_Harnessing_Adversarial_Examples,Towards_Evaluating_the_Robustness_of_Neural_Networks,YOPO,ForFree,freeLB}. Adversarial training is widely adopted in both the computer vision field~\citep{Explaining_and_Harnessing_Adversarial_Examples,Towards_Evaluating_the_Robustness_of_Neural_Networks,YOPO,ForFree,stable_training,local_stability} and the natural language processing field~\citep{Robust_Machine_Translation,Robsut_Translation_Doubly_Adversarial,freeLB,robust_domain_text,robust_text_class}. 

\subsection{Generative Adversarial Networks}

Generative Adversarial Networks (GANs)~\citep{DBLP:journals/corr/GAN} are also applications of adversarial training, which train both the generator model and the discriminator model with an adversarial learning target. GANs have been widely used in multiple machine learning fields for generating high-quality synthetic data, such as computer vision~\citep{GAN8, GAN9}, natural language processing~\citep{GAN16, GAN17}, and time series synthesis~\citep{GAN19,GAN21}. In the finance field, several GAN models~\citep{GAN19,GAN21,simonetto2018generatingspikingGAN,zhang2018GANingrids,ramponi2018tcGAN,yoon2019timeGAN,takahashi2019modelingwithGAN,data-Augmentation-GANs,Clinical-time-series-GAN,TTS-GAN} are proposed to generate or synthesize better time-series data that can preserve temporal dynamics in time-series.

\subsection{Time Series Analysis and Volume Prediction}

Trading volume prediction plays an important role in algorithmic trading strategies~\citep{volume_important_nn,VWAP_execution,target_volume_weighted_average_price,Optimal_VWAP_Tracking,Improving_VWAP_strategies,Forecasting_volume_and_volatility,Intra-daily_volume_modeling}. Many efforts are paid to volume prediction~\citep{Daily_volume_forecastigng,Forecasting_intraday_volume_Comparison,trading_volume_in_Chinese,SVM_volume,DCS_volume,kalman_volume,Bayesian_volume,volume-lstm,ensemble_volume}. Machine learning or deep learning methods have many applications in volume prediction. \cite{SVM_volume} propose a dynamic SVM-based approach for volume forecasting and \cite{kalman_volume} adopt a Kalman filter approach. \cite{volume-lstm} first adopt LSTM models in volume prediction. \cite{Bayesian_volume} model the volume forecasting task as Bayesian autoregressive conditional models. \cite{ensemble_volume} propose temporal mixture ensemble models for volume predictions. 

\section{Conclusion}

In this work, we propose adaptively scaled adversarial training (ASAT) to enhance neural networks in time series analysis. The proposed approach adaptively rescales different dimensions of perturbations according to their importance. ASAT can improve both the generalization ability and adversarial robustness of several baseline models on intra-day and inter-day volume prediction tasks compared to the baselines. We also utilize the dimension-wise adversarial sensitivity indicator to explain the decision bases of neural networks and show that ASAT can help the decision bases of models be more reasonable and more explainable.

\section*{Acknowledgements}
This work is supported by a Research Grant from Mizuho Securities Co., Ltd. We sincerely thank Mizuho Securities for valuable domain expert suggestions. The experiment data is provided by Mizuho Securities and Reuters.

\bibliography{main}

\appendix
\onecolumn

\section{Theoretical Details}

\subsection{Proofs of Proposition~\ref{prop:linear}}
\begin{propA}
Consider a linear model $f(\vect{x}, \bm\theta)=\bm\theta^\text{T}\vect{x}+b$. Suppose the loss function $\mathcal{L}(\hat y,y)$ is a function $\ell(\hat y)$ with respect to $\hat y=f(\vect{x}, \bm\theta)$ and $M=\sup|\ell''|$ exists. Then there exists $C>0$, such that:
\begin{align}
\mathcal{R}^{(i)}_\text{adv}(\epsilon)=C\epsilon|\theta_i|+ O\big(\frac{M\theta_i^2\epsilon^2}{2}\big),\quad (i=1,2,\cdots,k),
\end{align}
namely $\mathcal{R}^{(i)}_\text{adv}(\epsilon)\propto|\theta_i|$ holds approximately when $\epsilon$ is small.
\label{propA:linear}
\end{propA}

\begin{proof}
Since when $|\delta_i|\le\epsilon,\delta_{\ne i}=0$, according to Taylor series with Lagrange form of the remainder, there exists $\bm\eta$ near $\vect{x}$, such that:
\begin{align}
\mathcal{L}(f(\vect{x}+\bm\delta, \bm\theta), y)-\mathcal{L}(f(\vect{x}, \bm\theta), y)=\ell'(f(\vect{x}, \bm\theta))(\theta_i\delta_i)+\frac{\ell''(f(\bm\eta, \bm\theta))}{2}(\theta_i\delta_i)^2,
\\
\max\limits_{|\delta_i|\le\epsilon,\delta_{\ne i}=0}\big[\mathcal{L}(f(\vect{x}+\bm\delta, \bm\theta), y)-\mathcal{L}(f(\vect{x}, \bm\theta), y)\big]=|\ell'(f(\vect{x}, \bm\theta))||\theta_i|\epsilon+\frac{\ell''(f(\bm\eta, \bm\theta))}{2}(\theta_i\epsilon)^2,
\end{align}
therefore, when $C=\mathbb{E}_{(\vect{x}, y)\sim\mathcal{D}}|\ell'(f(\vect{x}, \bm\theta))|$, 
\begin{align}
\mathcal{R}^{(i)}_\text{adv}(\epsilon)&=\mathbb{E}_{(\vect{x}, y)\sim\mathcal{D}}\left[\max\limits_{|\delta_i|\le\epsilon,\delta_{\ne i}=0}\big[\mathcal{L}(f(\vect{x}+\bm\delta, \bm\theta), y)-\mathcal{L}(f(\vect{x}, \bm\theta), y)\big]\right]\\
&=C\epsilon|\theta_i|+\frac{\mathbb{E}_{(\vect{x}, y)\sim\mathcal{D}}\ell''(f(\bm\eta, \bm\theta))}{2}(\theta_i\epsilon)^2=C\epsilon|\theta_i|+ O\big(\frac{M\theta_i^2\epsilon^2}{2}\big).
\end{align}
\end{proof}

\subsection{Proofs of Proposition~\ref{prop:risk_non_linear}}

\begin{propA}
Consider a neural network that its loss function is convex and $M$-smooth in the neighborhood of $\bm\theta$ \footnote{Note that $\mathcal{L}$ is only required to be convex and $M$-smooth in the neighborhood of $\bm\theta$ instead of the entire $\mathbb{R}^k$.}. Then, 
\begin{align}
\mathcal{R}^{(i)}_\text{adv}(\epsilon)=\epsilon\big(\mathbb{E}| g_i|\big)+ O\big(\frac{M\epsilon^2}{2}\big),\quad (i=1,2,\cdots,k),
\end{align}
where ${\vect{g}}=\nabla_{\vect{x}}\mathcal{L}( f(\vect{x},\bm\theta), y)$ and its $i$-th dimension is $g_i$. Namely $\mathcal{R}^{(i)}_\text{adv}(\epsilon)\propto\mathbb{E}|g_i|$ holds approximately when $\epsilon$ is small.
\label{propA:risk_non_linear}
\end{propA}

\begin{proof}
Since when $|\delta_i|\le\epsilon,\delta_{\ne i}=0$, according to Taylor series with Lagrange form of the remainder, there exists $\bm\eta$ near $\vect{x}$, such that:
\begin{align}
\mathcal{L}(f(\vect{x}+\bm\delta, \bm\theta), y)-\mathcal{L}(f(\vect{x}, \bm\theta), y)= g_i\delta_i+\frac{\mathcal{L}''_i(f(\bm\eta, \bm\theta), y)}{2}\delta_i^2,
\end{align}
therefore, 
\begin{align}
\mathcal{R}^{(i)}_\text{adv}(\epsilon)&=\mathbb{E}_{(\vect{x}, y)\sim\mathcal{D}}\left[\max\limits_{|\delta_i|\le\epsilon, \delta_{j}=0 (i \ne j)}\big[\mathcal{L}(f(\vect{x}+\bm\delta, \bm\theta), y)-\mathcal{L}(f(\vect{x}, \bm\theta), y)\big]\right]\\
&=\mathbb{E}[\max\limits_{|\delta_i|\le\epsilon, \delta_{j}=0 (i \ne j)} g_i\delta_i]+O(\frac{M}{2}\epsilon^2)=\epsilon\big(\mathbb{E}| g_i|\big)+ O\big(\frac{M\epsilon^2}{2}\big).
\end{align}
\end{proof}

\subsection{Proofs of Theorem~\ref{thm:risk_rescaling}}
\begin{thmA}
Consider a neural network that is convex and $M$-smooth in the neighborhood of $\bm\theta$, suppose $\vect{g}$ denotes the gradient, $\vect{g}=\nabla_{\vect{x}}\mathcal{L}( f(\vect{x},\bm\theta), y)$, then its adversarial sensitivity or risk under our proposed adaptively scaled adversarial attacks where $S=\{\bm\delta:\|\bm\alpha^{-1}\odot\bm\delta\|_p\le\epsilon\}$ is:
\begin{align}
\mathcal{R}_\text{adv}(S)=\big(1+o(1)\big)\epsilon\big(\mathbb{E}\|\bm\alpha\odot{\vect{g}}\|_\frac{p}{p-1}\big).
\end{align}
\label{thmA:risk_rescaling}
\end{thmA}
\begin{proof}
Since $\mathcal{L}$ is convex and $M$-smooth near $\bm\theta$,
\begin{align}
\max\limits_{\bm\delta \in S} \mathcal{L}(f(\vect{x}+\bm\delta, \bm\theta), y)-\mathcal{L}(f(\vect{x}, \bm\theta), y)&=\max\limits_{\bm\delta \in S}\vect{g}\cdot\bm\delta+O(\frac{M}{2}\|\bm\delta\|_2^2)\\
&= \max\limits_{\bm\delta \in S}\vect{g}\cdot\bm\delta +o(\epsilon).
\end{align}
According to Holder Inequality, 
\begin{align}
    \vect{g}\cdot\bm\delta=(\vect{g}\odot\bm\alpha)\cdot(\bm\delta\odot\bm\alpha^{-1})\le \|\vect{g}\odot\bm\alpha\|_q\|\bm\delta\odot\bm\alpha^{-1}\|_p=\epsilon\|\vect{g}\odot\bm\alpha\|_q,
\end{align}
where $\frac{1}{p}+\frac{1}{q}=1$, $q=\frac{p}{p-1}$.

Therefore, 
\begin{align}
\mathcal{R}_\text{adv}(S)&=\mathbb{E}_{(\vect{x}, y)\sim\mathcal{D}}\left[\max\limits_{\bm\delta \in S} \mathcal{L}(f(\vect{x}+\bm\delta, \bm\theta), y)-\mathcal{L}(f(\vect{x}, \bm\theta), y)\right]\\
&=\mathbb{E}_{(\vect{x}, y)\sim\mathcal{D}}\left[\max\limits_{\bm\delta \in S}\vect{g}\cdot\bm\delta\right] +o(\epsilon)\\
&=\mathbb{E}_{(\vect{x}, y)\sim\mathcal{D}}\left[\epsilon\|\vect{g}\odot\bm\alpha\|_q\right] +o(\epsilon)\\
&=\big(1+o(1)\big)\epsilon\big(\mathbb{E}\|\bm\alpha\odot{\vect{g}}\|_\frac{p}{p-1}\big).
\end{align}

\end{proof}

\subsection{Proofs of Theorem~\ref{thm:generalization_error}}

\begin{thmA}
\label{thmA:generalization_error}
Suppose the training distribution $\mathcal{P}:=p(\vect{z})$ and the test distribution $\mathcal{Q}:=q(\vect{z})$ are two close distributions that have definite first and second order distributional shift distances, namely $D_1<+\infty, D_2<+\infty$, for any loss function $\mathcal{L}$ that maps an instance to $[0, 1]$ and is convex and $M$-smooth in the neighborhood of $\bm\theta$, with probability 1-$\omega$ over the choice of the training set $\mathcal{D}\sim \mathcal{P}$, the following bound holds:
\begin{align}
\mathcal{L}_\mathcal{Q}\le
\mathcal{L}_\mathcal{D}+\frac{\big(1+o(1)\big)k^\frac{1}{p}\mathcal{R}_\text{adv}(S)}{\epsilon}+\frac{MD_2}{2}+\text{Rem},
\end{align}
where $S=\{\bm\delta:\|\bm\beta^{-1}\odot\bm\delta\|_p\le\epsilon\}$, and the remainder term is $\text{Rem}=\sqrt{\frac{\log\frac{|\mathcal{D}|}{\omega}+2\log|\mathcal{D}|}{2|\mathcal{D}|}}+\frac{1}{|\mathcal{D}|}$. 
\end{thmA}
\begin{proof}
The main contribution of our work is Lemma~\ref{lemma:risk},
\begin{lem}
\label{lemma:risk}
For $S=\{\bm\delta:\|\bm\beta^{-1}\odot\bm\delta\|_p\le\epsilon\}$,
\begin{align}
\mathcal{L}_\mathcal{Q}\le
\mathcal{L}_\mathcal{P}+\frac{\big(1+o(1)\big)k^\frac{1}{p}\mathcal{R}_\text{adv}(S)}{\epsilon}+\frac{MD_2}{2}.
\end{align}
\end{lem}

We prove Lemma~\ref{lemma:risk} first. Suppose $\vect{x}, y = \phi(\vect{z}')$ where $\vect{z}'=(\vect{x}', y)$, note that:
\begin{align}
    \mathcal{L}_\mathcal{Q}-\mathcal{L}_\mathcal{P}&=\mathbb{E}_{\vect{z}\sim q(\vect{z})}(\mathcal{L})-\mathbb{E}_{\vect{z}\sim p(\vect{z})}(\mathcal{L})\\
    &=\int q(\vect{z})\mathcal{L}d\vect{z}-\int p(\vect{z})\mathcal{L}d\vect{z}\\
    &=\int q(\phi(\vect{z}'))\mathcal{L}(f(\vect{x}, \bm\theta), y)d\phi(\vect{z}')-\int p(\vect{z})\mathcal{L}(f(\vect{x}, \bm\theta), y)d\vect{z}\\    
    &=\int p(\vect{z}')\mathcal{L}(f(\vect{x}, \bm\theta), y)d\vect{z}'-\int p(\vect{z})\mathcal{L}(f(\vect{x}, \bm\theta), y)d\vect{z}\\
    &=\int p(\vect{z}')\big[\mathcal{L}(f(\vect{x}, \bm\theta), y)-\mathcal{L}(f(\vect{x}', \bm\theta), y)\big]d\vect{z}'\\
    &\le \int p(\vect{z}')\big[\vect{g}\cdot (\vect{z}-\vect{z}')+\frac{M}{2}\|\vect{z}-\vect{z}'\|_2^2\big]d\vect{z}'\\
    &=\mathbb{E}_{\vect{z}\sim p(\vect{z})}\big[\vect{g}\cdot (\phi(\vect{z})-\vect{z})+\frac{M}{2}\|\phi(\vect{z})-\vect{z}\|_2^2\big].
\end{align}

Note that for $r=\frac{p}{p-1}\ge 1$, $\|\vect{v}\|_r=\big(\sum\limits_{i=1}^k|v_i|^{r}\big)^{\frac{1}{r}}=k^\frac{1}{r} \big(\frac{\sum\limits_{i=1}^k|v_i|^r}{k}\big)^{\frac{1}{r}} \ge k^\frac{1}{r} \big(\frac{\sum\limits_{i=1}^k|v_i|}{k}\big)=  k^{\frac{1}{r}-1}(\sum\limits_{i=1}^k|v_i|) = k^{\frac{1}{r}-1}\|\vect{v}\|_1=k^{-\frac{1}{p}}\|\vect{v}\|_1$, therefore,
\begin{align}
    \mathcal{L}_\mathcal{Q}&\le\mathcal{L}_\mathcal{P}+\mathbb{E}_{\vect{z}\sim p(\vect{z})}[{\vect{g}}\cdot(\phi(\vect{z})-\vect{z})]+\frac{M}{2}\mathbb{E}_{\vect{z}\sim p(\vect{z})}\|\phi(\vect{z})-\vect{z}\|_2^2\\
    &= \mathcal{L}_\mathcal{P}+\mathbb{E}_{\vect{z}\sim p(\vect{z})}[{\vect{g}}\cdot\bm\beta]+\frac{MD_2}{2} \\    
    &\le \mathcal{L}_\mathcal{P}+\mathbb{E}_{\vect{z}\sim p(\vect{z})}\|{\vect{g}}\odot\bm\beta\|_1+\frac{MD_2}{2} \\
    &\le \mathcal{L}_\mathcal{P}+k^\frac{1}{p}\mathbb{E}_{\vect{z}\sim p(\vect{z})}\|{\vect{g}}\odot\bm\beta\|_\frac{p}{p-1}+\frac{MD_2}{2}\\
    &=\mathcal{L}_\mathcal{P}+\frac{\big(1+o(1)\big)k^\frac{1}{p}\mathcal{R}_\text{adv}(S)}{\epsilon}+\frac{MD_2}{2},
\end{align}
where $S=\{\bm\delta:\|\bm\beta^{-1}\odot\bm\delta\|_p\le\epsilon\}$, and $\mathcal{R}_\text{adv}(S)=\big(1+o(1)\big)\epsilon\big(\mathbb{E}_{\vect{z}\sim p(\vect{z})}\|\bm\beta\odot{\vect{g}}\|_\frac{p}{p-1}\big)$.

Combined with Lemma~\ref{lemma:Bayes} in \cite{Some_PAC-Bayesian_Theorems}, the theorem is proven.
\begin{lem}
\label{lemma:Bayes}
For any loss function $\mathcal{L}$ that maps an instance to $[0, 1]$, with probability 1-$\omega$ over the choice of the training, set $\mathcal{D}\sim \mathcal{P}$, the following bound holds:
\begin{align}
\mathcal{L}_\mathcal{P}\le
\mathcal{L}_\mathcal{D}+\sqrt{\frac{\log\frac{|\mathcal{D}|}{\omega}+2\log|\mathcal{D}|}{2|\mathcal{D}|}}+\frac{1}{|\mathcal{D}|},
\end{align}
\end{lem}

\end{proof}

\subsection{Closed-form Solutions in FSM and PGD Algorithms after Rescaling}

After rescaling, the constraint set is $S=\{\bm\delta:\|\bm\alpha^{-1}\odot\bm\delta\|_p\le\epsilon\}$. For the fast gradient method target, namely maximizing the inner product of $\bm \delta$ and the gradient $\vect{g}$, the solution is shown in Corollary~\ref{cor:fgsm}.

\begin{cor}
Suppose the constraint set is $S=\{\bm\delta:\|\bm\alpha^{-1}\odot\bm\delta\|_p\le\epsilon\}$, the following are the solution of the optimization problem:
\begin{align}
\bm\delta = \argmax_{\bm\delta\in S}\bm\delta^{\text{T}}\vect{g}=\epsilon\big(\bm\alpha\odot\text{sgn}(\vect{g})\big)\odot\frac{|\bm\alpha\odot\vect{g}|^\frac{1}{p-1}}{\||\bm\alpha\odot\vect{g}|^\frac{1}{p-1}\|_p}.
\end{align}
\label{cor:fgsm}
\end{cor}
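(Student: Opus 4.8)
The plan is to reduce the rescaled problem to the ordinary FGSM problem of Eq.~(\ref{eq:fast_gradient}) by a diagonal change of variables. First I would substitute $\bm\eta := \bm\alpha^{-1}\odot\bm\delta$, equivalently $\bm\delta = \bm\alpha\odot\bm\eta$, so that the feasible set $S=\{\bm\delta:\|\bm\alpha^{-1}\odot\bm\delta\|_p\le\epsilon\}$ becomes exactly the $L_p$-ball $\{\bm\eta:\|\bm\eta\|_p\le\epsilon\}$. Under this substitution the linear objective transforms as $\bm\delta^\text{T}\vect{g} = (\bm\alpha\odot\bm\eta)^\text{T}\vect{g} = \bm\eta^\text{T}(\bm\alpha\odot\vect{g})$, i.e.\ into the inner product of $\bm\eta$ with the modified gradient $\tilde{\vect{g}} := \bm\alpha\odot\vect{g}$. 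Thus $\argmax_{\bm\delta\in S}\bm\delta^\text{T}\vect{g} = \bm\alpha\odot\big(\argmax_{\|\bm\eta\|_p\le\epsilon}\bm\eta^\text{T}\tilde{\vect{g}}\big)$, and the problem is now the standard one.

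Next I would invoke (or, for completeness, re-derive) the closed form for $\argmax_{\|\bm\eta\|_p\le\epsilon}\bm\eta^\text{T}\tilde{\vect{g}}$, which is Eq.~(\ref{eq:fast_gradient}) applied to $\tilde{\vect{g}}$. The derivation is via H\"older's inequality with conjugate exponent $q$ ($1/p+1/q=1$): $\bm\eta^\text{T}\tilde{\vect{g}}\le\|\bm\eta\|_p\|\tilde{\vect{g}}\|_q\le\epsilon\|\tilde{\vect{g}}\|_q$, with equality iff $\eta_i$ has the same sign as $\tilde g_i$ for every $i$ and $|\eta_i|^p\propto|\tilde g_i|^q$; solving the proportionality against the budget $\|\bm\eta\|_p=\epsilon$ and using $q/p = 1/(p-1)$ gives $\bm\eta^\star = \epsilon\,\text{sgn}(\tilde{\vect{g}})\odot|\tilde{\vect{g}}|^{1/(p-1)}/\big\||\tilde{\vect{g}}|^{1/(p-1)}\big\|_p$. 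Substituting back, $\bm\delta^\star = \bm\alpha\odot\bm\eta^\star$, and since $\alpha_i\ge 0$ we have $\text{sgn}(\tilde{\vect{g}}) = \text{sgn}(\bm\alpha\odot\vect{g})$ and $|\tilde{\vect{g}}| = |\bm\alpha\odot\vect{g}|$, so $\bm\delta^\star = \epsilon\,(\bm\alpha\odot\text{sgn}(\vect{g}))\odot|\bm\alpha\odot\vect{g}|^{1/(p-1)}/\big\||\bm\alpha\odot\vect{g}|^{1/(p-1)}\big\|_p$, which is exactly the claimed expression; feasibility $\|\bm\eta^\star\|_p=\epsilon$ is immediate from the normalization.

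The points needing care rather than genuine difficulty are the degenerate exponents and non-uniqueness. For $p=1$ the map $t\mapsto t^{1/(p-1)}$ degenerates and the maximizer concentrates on $\argmax_i|\tilde g_i|$, while for $p=+\infty$ it is the uniform sign vector $\epsilon\,\text{sgn}(\tilde{\vect{g}})$ rescaled; these should be stated as the appropriate limits of the formula rather than obtained by direct substitution. When $\tilde{\vect{g}}$ has a vanishing coordinate the corresponding $\delta_i$ is unconstrained within the remaining budget, and for $p\in\{1,+\infty\}$ ties in the relevant $\argmax$ also break uniqueness; in all these cases the displayed formula simply selects one valid optimizer. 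Everything else is routine algebra, so the write-up would consist essentially of the change of variables and the H\"older equality analysis.
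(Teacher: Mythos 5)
Your proposal is correct and follows essentially the same route as the paper's own proof: both rest on the pairing $\bm\delta^{\text{T}}\vect{g}=(\bm\alpha^{-1}\odot\bm\delta)^{\text{T}}(\bm\alpha\odot\vect{g})$, H\"older's inequality with conjugate exponent $q$, the equality condition $|\bm\alpha^{-1}\odot\bm\delta|^p\propto|\bm\alpha\odot\vect{g}|^q$ with matching signs, and solving the normalization against the budget $\|\bm\alpha^{-1}\odot\bm\delta\|_p=\epsilon$. Your packaging via the change of variables $\bm\eta=\bm\alpha^{-1}\odot\bm\delta$ (reducing to the unrescaled FGSM problem) and your remarks on the degenerate cases $p\in\{1,+\infty\}$ and zero coordinates are only cosmetic additions to the same argument.
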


\begin{proof}
Suppose $\frac{1}{p}+\frac{1}{q}=1$, according to the Holder Inequality,
\begin{align}
\bm\delta^{\text{T}}\vect{g}=
(\bm\alpha^{-1}\odot\bm\delta)^{\text{T}}(\bm\alpha\odot\vect{g})\le\|\bm\alpha^{-1}\odot\bm\delta\|_p\|\bm\alpha\odot\vect{g}\|_q=\epsilon\|\bm\alpha\odot\vect{g}\|_q.
\end{align}

The equation holds if and only if, there exists $\lambda>0$, such that:
\begin{align}
|\bm\alpha^{-1}\odot\bm\delta|^p = \lambda|\bm\alpha\odot\vect{g}|^q ,\quad \|\bm\alpha^{-1}\odot\bm\delta\|_p =\epsilon \text{ , and}\quad \text{sgn}(\bm\alpha^{-1}\odot\bm\delta)=\text{sgn}(\bm\alpha\odot\vect{g}),
\end{align}
that is to say,
\begin{align}
\bm\alpha^{-1}\odot\bm\delta = \lambda \text{sgn}(\bm\alpha\odot\vect{g})\odot|\bm\alpha\odot\vect{g}|^{\frac{q}{p}} = \lambda \text{sgn}(\vect{g})\odot|\bm\alpha\odot\vect{g}|^{\frac{1}{p-1}}.
\end{align}

According to $\|\bm\alpha^{-1}\odot\bm\delta\|_p=\epsilon$, we can solve the $\lambda$,
\begin{align}
\|\bm\alpha^{-1}\odot\bm\delta\| = \|\text{sgn}(\vect{g})\odot\lambda|\bm\alpha\odot\vect{g}|^{\frac{1}{p-1}}\|_p=\lambda \||\bm\alpha\odot\vect{g}|^{\frac{1}{p-1}}\|_p=\epsilon, \quad \lambda = \frac{\epsilon}{\||\bm\alpha\odot\vect{g}|^{\frac{1}{p-1}}\|_p}.
\end{align}

To conclude,
\begin{align}
\bm\alpha^{-1}\odot\bm\delta &= \epsilon\text{sgn}(\vect{g})\odot\frac{|\bm\alpha\odot\vect{g}|^\frac{1}{p-1}}{\||\bm\alpha\odot\vect{g}|^\frac{1}{p-1}\|_p},\\
\bm\delta &=\epsilon\big(\bm\alpha\odot\text{sgn}(\vect{g})\big)\odot\frac{|\bm\alpha\odot\vect{g}|^\frac{1}{p-1}}{\||\bm\alpha\odot\vect{g}|^\frac{1}{p-1}\|_p}.
\end{align}
\end{proof}

For the projection function $\Pi_S(\vect{v})$ that projects $\vect{v}$ into the set $S$,  we adopt the following variants:
\begin{align}
\Pi_{\{\vect{v}:\|\bm\alpha^{-1}\odot\vect{v}\|_2\le\epsilon\}}{(\vect{v})} =& \min\{\|\bm\alpha^{-1}\odot\vect{v}\|_2,\epsilon\}\frac{\vect{v}}{\|\bm\alpha^{-1}\odot\vect{v}\|_2} \label{eqA:projection_new_L2},\\
\Pi_{\{\vect{v}:\|\bm\alpha^{-1}\odot\vect{v}\|_{+\infty}\le\epsilon\}}{(\vect{v})} &= \bm\alpha\odot\text{clip}(\bm\alpha^{-1}\odot\vect{v},-\epsilon, \epsilon) \label{eqA:projection_new_Linf}.
\end{align}

The constraint set is $S=\{\bm\delta:\|\bm\alpha^{-1}\odot\bm\delta\|_p\le\epsilon\}$. Define $\vect{u}=\bm\alpha^{-1}\odot\vect{v}, S'=\{\vect{u}:\|\vect{u}\|_p\le\epsilon\}$. Consider a mapping $\phi(\vect{v})=\alpha\odot\vect{v}$, then $\phi(\vect{u})=\vect{v}$ and $\phi(S')=S$.

Since $\vect{u}$ is projected to $\Pi_{S'}(\vect{u})$ in $S'$, we adopt the variant that projects $\vect{v}=\phi(\vect{u})$ to $\phi(\Pi_{S'}(\vect{u}))$, namely,
\begin{align}
\Pi_{S}(\vect{v})=\phi(\Pi_{S'}(\vect{u}))=\bm\alpha\odot(\Pi_{S'}(\vect{u}))=\bm\alpha\odot(\Pi_{S'}(\bm\alpha^{-1}\odot\vect{v})),
\end{align}
therefore,
\begin{align}
\Pi_{\{\vect{v}:\|\bm\alpha^{-1}\odot\vect{v}\|_2\le\epsilon\}}{(\vect{v})} =& \min\{\|\vect{u}\|_2,\epsilon\}\frac{\bm\alpha\odot\vect{u}}{\|\vect{u}\|_2} = \min\{\|\bm\alpha^{-1}\odot\vect{v}\|_2,\epsilon\}\frac{\vect{v}}{\|\bm\alpha^{-1}\odot\vect{v}\|_2},\\
\Pi_{\{\vect{v}:\|\bm\alpha^{-1}\odot\vect{v}\|_{+\infty}\le\epsilon\}}{(\vect{v})} &= \bm\alpha\odot\text{clip}(\vect{u},-\epsilon, \epsilon) =  \bm\alpha\odot\text{clip}(\bm\alpha^{-1}\odot\vect{v},-\epsilon, \epsilon).
\end{align}

\section{Hypothesis Tests}

We also conduct student-$t$ tests on the hourly dataset to verify that the proposed ASAT outperforms baselines statistically significantly $(p<0.05)$. The null hypothesis is that the outperform of ASAT is worse than the baseline.  The degree of freedom is df$=5+5-2=8$ and the critical $t$-value is $1.86$. Experimental results of hypothesis tests are in Table~\ref{tab:t-tests}. We can see that $t$-values of MLP and Transformer models are larger than $1.86$. For the LSTM model, the improvements in ASAT are not significant.

%%%%%%%%%%%%%%%%%%%%%%%%%%%%%%%%%%%%%%%%%%%%%%%%%%%%%%%%%%%%%%%%%%%%%%%%%%%%%%%%%%%%%%%%
\begin{table*}[!h]
\caption{Experimental results of hypothesis tests. AT denotes adversarial training, and ASAT denotes adaptively scaled adversarial training.}
\scriptsize
\label{tab:t-tests}
\setlength{\tabcolsep}{4pt}
\centering
\begin{tabular}{@{}lcccc@{}}
\toprule
 \bf Dataset &  \multicolumn{4}{c}{\bf Hourly}\\ 
 \midrule
 \bf Model & \bf MSE & \bf RMSE  & \bf MAE  & \bf ACC\\ 
 \midrule
 20-day average & {0.700} & {0.837} & {0.608} & {0.709} \\
 \midrule
 Linear & 0.227$\pm$0.019 & 0.477$\pm$0.019 & 0.370$\pm$0.023 & 0.708$\pm$0.012 \\
 +ASAT &  {0.206$\pm$0.004} & {0.454$\pm$0.004} & {0.347$\pm$0.006} & 0.720$\pm$0.007 \\
 $t$-values & 2.41 & 2.65 & 2.16 & 1.93\\
 \midrule
 LSTM & 0.223$\pm$0.005 & 0.472$\pm$0.006 & 0.361$\pm$0.007 & 0.708$\pm$0.004\\
 +ASAT &  {0.220$\pm$0.002} & {0.469$\pm$0.003} & {0.359$\pm$0.005} & {0.712$\pm$0.003}  \\
  $t$-values & 1.25 & 1.00 & 0.52 & 1.79\\
 \midrule
 Transformer & 0.219$\pm$0.012 & 0.470$\pm$0.013 & 0.357$\pm$0.012 & 0.711$\pm$0.012 \\
 +ASAT &  {0.201$\pm$0.002} & {0.449$\pm$0.002} & {0.342$\pm$0.004} & {0.726$\pm$0.003} \\
  $t$-values & 3.31 & 3.57 & 2.65 & 2.71\\
\bottomrule
\end{tabular}
\end{table*}
%%%%%%%%%%%%%%%%%%%%%%%%%%%%%%%%%%%%%%%%%%%%%%%%%%%%%%%%%%%%%%%%%%%%%%%%%%%%%%%%%%%%%%%%

\section{Details of Hyperparameters Choice}

We try $L_2$ and $L_{+\infty}$ constraints, try $K$ in $\{1, 2, 3\}$, and search the hyperparameter $\epsilon$ in $\{$0.001, 0.002, 0.005, 0.01, 0.02, 0.05, 0.1, 0.2, 0.5, 1$\}$. For time-dependently scaled adversarial training, we try $\gamma$ in $\{0.01, 0.02, 0.03\}$ in the Linear decay, and try $\gamma$ in $\{0.7, 0.8, 0.9\}$ in the Exp decay. To investigate the influence of hyperparameters $\epsilon$ and $\gamma$, we take the Transformer model on the five-minute dataset as an example and discuss the influence of $\epsilon$ according to the experimental results. Then, we report the best configurations of hyperparameters for every task. 

\subsection{The Influence of Hyperparameter}

We conduct experiments to investigate the influence of the hyperparameter $\epsilon$. Experimental results are shown in Table~\ref{tab:epsilon}, where the $L_{+\infty}$ constraint is adopted. The process of hyperparameter search shows that too large or small $\epsilon$ cannot improve the model accuracy well, and an appropriate $\epsilon$ needs to be selected. 

%%%%%%%%%%%%%%%%%%%%%%%%%%%%%%%%%%%%%%%%%%%%%%%%%%%%%%%%%%%%%%%%%%%%%%%%%%%%%%%%%%%%%%%%
\begin{table}[H]
\caption{Experimental results of different $\epsilon$.}
\scriptsize
\label{tab:epsilon}
\setlength{\tabcolsep}{5pt}
\centering
\begin{tabular}{@{}lcccc@{}}
\toprule
 \bf Settings & \bf MSE & \bf RMSE  & \bf MAE  & \bf ACC \\ 
 \midrule
 20-day average & {0.700} & {0.837} & {0.608} & {0.709}\\
 \midrule
 Transformer & 0.665$\pm$0.044 & {0.815}$\pm$0.027 & 0.610$\pm$0.032 & 0.695$\pm$0.019  \\
 \midrule
 +ASAT ($K=1, \epsilon=0.001$) & 0.672$\pm$0.066 & 0.819$\pm$0.039 & 0.616$\pm$0.050&0.692$\pm$0.030 \\
 +ASAT ($K=1, \epsilon=0.002$) & \textbf{0.625$\pm$0.023}& \textbf{0.791$\pm$0.015} &	\textbf{0.580$\pm$0.017}&\textbf{0.712$\pm$0.010} \\
 +ASAT ($K=1, \epsilon=0.005$) & 0.635$\pm$0.025& 0.797$\pm$0.016 &	0.588$\pm$0.020&0.708$\pm$0.011 \\
  \midrule 
  +ASAT ($K=2, \epsilon=0.005$) & 0.654$\pm$0.073 & 0.807$\pm$0.043 & 0.603$\pm$0.054&0.697$\pm$0.031 \\
 +ASAT ($K=2, \epsilon=0.01$) & \textbf{0.630$\pm$0.027}& \textbf{0.794$\pm$0.017} &	\textbf{0.584$\pm$0.020}&\textbf{0.708$\pm$0.011} \\
 +ASAT ($K=2, \epsilon=0.02$) & 0.665$\pm$0.068& 0.814$\pm$0.041 &	0.608$\pm$0.049&0.695$\pm$0.027 \\
  \midrule
  +ASAT ($K=3, \epsilon=0.005$) & 0.650$\pm$0.047 & 0.806$\pm$0.028 & 0.601$\pm$0.035&0.697$\pm$0.023 \\
 +ASAT ($K=3, \epsilon=0.01$) & \textbf{0.645$\pm$0.039}& \textbf{0.802$\pm$0.024} &	\textbf{0.597$\pm$0.029}&\textbf{0.701$\pm$0.018} \\
 +ASAT ($K=3, \epsilon=0.02$) & 0.651$\pm$0.040& 0.807$\pm$0.016 &	0.602$\pm$0.030&0.697$\pm$0.017 \\
\bottomrule
\end{tabular}
\end{table}
%%%%%%%%%%%%%%%%%%%%%%%%%%%%%%%%%%%%%%%%%%%%%%%%%%%%%%%%%%%%%%%%%%%%%%%%%%%%%%%%%%%%%%%%

\subsection{Detailed Hyperparameters Settings}

We conduct hyperparameter search experiments to find the best configurations of hyperparameters. 

\subsubsection{Settings on the Linear Model}

On the hourly dataset, for traditional adversarial training, and time-dependently scaled adversarial training, we adopt the $L_2$ constraint, $K=1$, and $\epsilon=0.5$. For time-dependently scaled adversarial training, $\gamma=0.01$ in the Linear decay, and $\gamma=0.9$ in the Exp decay. For ASAT and other variants, we adopt the $L_{+\infty}$ constraint, $K=3$, and $\epsilon=0.2$.

On the five-minute dataset, for traditional adversarial training, and time-dependently scaled adversarial training, we adopt the $L_2$ constraint, $K=1$, and $\epsilon=0.001$. For time-dependently scaled adversarial training, $\gamma=0.02$ in the Linear decay, and $\gamma=0.9$ in the Exp decay. For ASAT and other variants, we adopt the $L_{+\infty}$ constraint, $K=3$, and $\epsilon=0.5$.

\subsubsection{Settings on the LSTM Model}

On the hourly dataset, for traditional adversarial training, and time-dependently scaled adversarial training, we adopt the $L_2$ constraint, $K=1$, and $\epsilon=0.001$. For time-dependently scaled adversarial training, $\gamma=0.01$ in the Linear decay, and $\gamma=0.7$ in the Exp decay. For ASAT and other variants, we adopt the $L_{2}$ constraint, $K=3$, and $\epsilon=0.1$.

On the five-minute dataset, for traditional adversarial training, and time-dependently scaled adversarial training, we adopt the $L_2$ constraint, $K=1$, and $\epsilon=0.001$. For time-dependently scaled adversarial training, $\gamma=0.01$ in the Linear decay, and $\gamma=0.7$ in the Exp decay. For ASAT and other variants, we adopt the $L_{2}$ constraint, $K=3$, and $\epsilon=0.2$.

\subsubsection{Settings on the Transformer Model}

On the hourly dataset, for traditional adversarial training, and time-dependently scaled adversarial training, we adopt the $L_2$ constraint, $K=2$, and $\epsilon=0.001$. For time-dependently scaled adversarial training, $\gamma=0.01$ in the Linear decay, and $\gamma=0.9$ in the Exp decay. For ASAT and other variants, we adopt the $L_{+\infty}$ constraint, $K=1$, and $\epsilon=0.002$.

On the five-minute dataset, for traditional adversarial training, and time-dependently scaled adversarial training, we adopt the $L_{+\infty}$ constraint, $K=2$, and $\epsilon=0.02$. For time-dependently scaled adversarial training, $\gamma=0.01$ in the Linear decay, and $\gamma=0.9$ in the Exp decay. For ASAT and other variants, we adopt the $L_{+\infty}$ constraint, $K=3$, and $\epsilon=0.001$.

\subsubsection{Settings for Enhancing Existing Methods.}

We train every model for $5$ epochs. We repeat every experiment with $5$ runs. In DeepAR, we adopt the $L_{+\infty}$ constraint, $K=2$, and $\epsilon=0.001$. In M-LSTM, we choose the time scales as $\{1, 2, 4, 8\}$. We adopt the $L_2$ constraint, $K=2$, and $\epsilon=0.2$.

\end{document}